\documentclass[11pt,letterpaper]{article}
\usepackage[T1]{fontenc}
\usepackage[margin=1in]{geometry}
\usepackage{times}
\usepackage[authoryear,round]{natbib}
\setcitestyle{citesep={;},aysep={,},yysep={;}}
\usepackage{microtype}
\usepackage{etoolbox}
\usepackage{parskip}
\usepackage{titlesec}
\usepackage{needspace}
\pretocmd{\section}{\Needspace{6\baselineskip}}{}{}
\titlespacing*{\section}{0pt}{2ex plus 0.5ex minus 0.2ex}{1ex plus 0.2ex}
\titlespacing*{\subsection}{0pt}{1.8ex plus 0.5ex minus 0.2ex}{0.8ex plus 0.2ex}
\titlespacing*{\paragraph}{0pt}{1.5ex plus 0.5ex minus 0.2ex}{1em}
\usepackage{amsmath,amssymb,amsthm,mathtools}
\usepackage{graphicx,booktabs,array,multirow,longtable}
\usepackage{flafter,placeins}
\let\preprinttable\table
\renewcommand{\table}[1][htbp]{\preprinttable[!htbp]}
\BeforeBeginEnvironment{longtable}{\par\Needspace{0.85\textheight}}
\usepackage{algorithm,algpseudocode}
\AtBeginEnvironment{algorithmic}{\small}
\usepackage{xcolor}
\usepackage{xurl}
\usepackage{hyperref}
\hypersetup{
  colorlinks=true,
  linkcolor=blue!45!black,
  citecolor=blue!45!black,
  urlcolor=blue!45!black,
  pdfauthor={Yupeng Qiu; Han Fang; Ee-Chien Chang},
  pdftitle={ClockMark: Adaptive and Host-Dependent Watermarking for Large Language Models},
  pdfsubject={Large language model watermarking},
  pdfkeywords={large language models, multi-bit watermarking, distribution preservation, robustness}
}

\apptocmd{\thebibliography}{\interlinepenalty=10000}{}{}
\newtheorem{theorem}{Theorem}

\newcommand{\papertablestyle}{%
  \centering\footnotesize
  \renewcommand{\arraystretch}{1.05}%
  \setlength{\heavyrulewidth}{0.6pt}%
  \setlength{\lightrulewidth}{0.35pt}%
  \setlength{\cmidrulewidth}{0.3pt}%
}

\newcommand{\E}{\mathbb E}
\newcommand{\Prb}{\mathbb P}

\newcommand{\R}{\mathbb R}

\DeclareMathOperator*{\argmin}{arg\,min}
\DeclareMathOperator{\Exp}{Exp}
\DeclareMathOperator{\sgn}{sign}
\title{\LARGE\bfseries ClockMark: Adaptive and Host-Dependent Watermarking for Large Language Models}
\author{%
  Yupeng Qiu\textsuperscript{1}\qquad
  Han Fang\textsuperscript{2}\thanks{Corresponding author: \href{mailto:fanghan@ustc.edu.cn}{\texttt{fanghan@ustc.edu.cn}}.}\qquad
  Ee-Chien Chang\textsuperscript{1}\\[0.8em]
  \parbox{0.95\textwidth}{\centering\small
    \textsuperscript{1}School of Computing, National University of Singapore, Singapore\\[0.4em]
    \textsuperscript{2}School of Cyber Science and Technology,\\
    University of Science and Technology of China, Hefei, Anhui, China
  }
}
\date{}
\begin{document}
\maketitle
\begin{abstract}
Large language model (LLM) watermarking operates on the next-token distribution that evolves during generation. In streaming generation, each sampled token becomes part of the text and affects subsequent distributions. Embedding therefore proceeds with a changing host, unlike classical watermarking, which adapts embedding to an existing host signal. To accommodate this difference, we formulate a novel embedding framework for LLM watermarking based on two principles. We first argue that embedding should be adaptive, retaining the state of previous sampling decisions to better guide subsequent choices. We further argue that embedding should be host-dependent, using the current LLM distribution to constrain both token selection and state updates. Guided by these principles and the two requirements of distribution preservation and robustness to text editing, we propose ClockMark. It combines adaptive and host-dependent embedding with message spreading, embedding the full payload in each watermarked token. Theoretically, we prove that ClockMark preserves the original LLM sequence distribution, thereby maintaining generation quality. Empirically, extensive experiments against 12 baselines demonstrate improved robustness to text editing, with gains of up to 16.08 percentage points in bit accuracy over the strongest tested baseline.
\end{abstract}
\section{Introduction}
\label{sec:intro}
Texts generated by large language models (LLMs) are increasingly difficult to distinguish from human-written content \citep{jakesch2023human,porter2024poetry}. Text watermarking offers a practical way to verify model-generated text by embedding secret signals during generation. Earlier zero-bit methods can only determine whether the given text is watermarked, which is insufficient for more detailed attribution \citep{kirchenbauer2023watermark,hu2024unbiased,kuditipudi2024distortionfree,dathathri2024scalable}. Multi-bit watermarking addresses this limitation by embedding identifiers, such as model or user IDs, that can later be recovered for tracing \citep{wang2024codable,yoo2024mpac}.

However, achieving robustness to text editing while preserving generation quality remains challenging. Existing methods improve robustness through balanced allocation and structured coding \citep{yoo2024mpac,qu2025robust,xu2026xmark}. More recent methods broaden watermark signal coverage by allowing each token to carry signals for all message bits \citep{gilani2026arcmark,gloaguen2026everybit}. In parallel, distribution-preserving approaches use unbiased reweighting or keyed sampling to maintain generation quality \citep{feng2025bimark,jiang2025stealthink,cui2026mc2mark,wang2026pura}. For methods that derive watermark randomness from preceding text, repeated contexts introduce a further difficulty. Reusing the same randomness at these contexts can bias generation, whereas skipping them sacrifices embedding opportunities \citep{dathathri2024scalable,feng2025bimark,jiang2026mirrormark,wang2026pura}. Increasing the watermark context length reduces repetition but allows a single edit to disrupt watermark signals in more subsequent tokens.

Addressing these challenges requires considering how watermark embedding interacts with the LLM generation process. LLM watermarking operates on the next-token distribution, which becomes available one step at a time and depends on the previously generated text. In streaming generation, each sampled token becomes part of the text and affects the distributions used for subsequent sampling. Embedding therefore proceeds through a sequence of sampling choices under a changing host, unlike classical watermarking, which adapts embedding to an existing host signal \citep{moulin2003information}.

To accommodate this difference, we formulate a novel embedding framework for LLM watermarking based on two principles. Adaptive embedding retains the state of previous sampling decisions and uses it to guide subsequent choices. Host-dependent embedding uses the current LLM distribution to constrain both token selection and state updates. A concrete method developed within this framework should also meet two requirements: preserving the LLM sequence distribution and supporting reliable message recovery after text editing.

Following this framework, we propose ClockMark. The embedding process is a race among candidate tokens. Each candidate starts with a countdown determined by the secret key and all payload bits, so each sampling decision depends jointly on the full payload without position allocation. The current LLM probability determines how quickly each countdown decreases, and the first candidate to reach zero is selected as the next token. To implement adaptive embedding, ClockMark gives only the selected candidate a new countdown and retains the unfinished countdowns of the others. It saves these values for each context and resumes them when that context reappears. This allows ClockMark to use one-token contexts without skipping repetitions, limiting the number of subsequent watermark signals affected by an edit. To implement host-dependent embedding, all candidates advance for the same duration at rates set by their current LLM probabilities. These probabilities therefore govern both token selection and the unfinished countdowns retained for future decisions. Theoretically, we prove that ClockMark preserves the original LLM sequence distribution, thereby maintaining generation quality. Empirically, extensive experiments against 12 baselines demonstrate improved robustness to text editing, with gains of up to 16.08 percentage points in bit accuracy over the strongest tested baseline.

Our main contributions are as follows:
\begin{itemize}
    \item We formulate an adaptive and host-dependent framework for LLM watermarking that models embedding as a sequential process guided by the evolving host distribution and the state retained from previous sampling decisions.
    \item We propose ClockMark, which combines full-payload embedding with adaptive and host-dependent sampling. We theoretically prove that the resulting process preserves the original LLM sequence distribution, thereby maintaining generation quality.
    \item We develop a unified receiver that supports message recovery and watermark detection through a shared statistic computed from the received text and secret key. We establish theoretical false-positive control for detection.
    \item Extensive experiments against 12 multi-bit watermarking baselines demonstrate improved robustness to text editing while maintaining generation quality, with gains of up to 16.08 percentage points in bit accuracy over the strongest tested baseline.
\end{itemize}

\section{Related Work}
\label{sec:related}

\paragraph{Robust Embedding.}
Robust embedding depends on how watermark signals are distributed across tokens and how they depend on the preceding context. Early multi-bit methods encode messages through message-dependent hashing or shifts of pseudorandom scores \citep{wang2024codable,fernandez2023three}. Position-allocation methods assign tokens to individual message positions or segments \citep{yoo2024mpac}. Structured coding and balanced allocation reduce vulnerability to text editing by addressing uneven message coverage \citep{qu2025robust,xu2026xmark,jiang2026mirrormark}. More recent methods combine signals for multiple bits within each token through layered embedding \citep{cui2026mc2mark,park2026weavemark} or constraints on the XOR of several bits \citep{agrawal2026mosaic}. ArcMark and EBEW allow each token to carry signals for all message bits \citep{gilani2026arcmark,gloaguen2026everybit}. Allocation state and feedback have also been used to improve message coverage during generation \citep{jiang2026mirrormark,gloaguen2026everybit}. These designs distribute message information more broadly, reducing dependence on separate token subsets.

Context dependence introduces a separate source of vulnerability. An edit can disrupt watermark signals in subsequent unchanged tokens, even when signals are distributed evenly across message bits \citep{fernandez2023three,yoo2024mpac}. Shorter contexts limit this damage but repeat more frequently. Skipping repeated contexts consequently leaves fewer positions available for embedding \citep{feng2025bimark,jiang2026mirrormark,wang2026pura}. ClockMark addresses both issues by combining message spreading with adaptive sampling: each watermarked token carries information about all message bits, while retained sampling progress allows repeated one-token contexts to be reused for embedding and limits the spread of edit damage.

\paragraph{Distribution-Preserving Embedding.}
Preserving text quality requires both a distribution-preserving sampling rule and a way to maintain this property when contexts repeat. Biased embedding strengthens watermark signals by changing token probabilities \citep{kirchenbauer2023watermark}. Probability-aware constraints can reduce the resulting quality loss, but do not generally preserve the original distribution \citep{wang2024codable,park2026watermod,jiang2024credid}. Distribution-preserving methods follow two main approaches: unbiased reweighting modifies token probabilities while recovering the original distribution in expectation \citep{hu2024unbiased,wu2024dipmark,chen2025mcmark}, whereas keyed sampling embeds watermark signals through the randomness used to select tokens \citep{aaronson2023watermarking,kuditipudi2024distortionfree,dathathri2024scalable}. Multi-bit methods extend these approaches by making either the reweighting or the sampling rule depend on the message \citep{feng2025bimark,jiang2025stealthink,cui2026mc2mark,park2026weavemark,gilani2026arcmark,zhu2026quantilemark,wang2026pura}.

Context dependence raises a separate issue: preserving the distribution of an individual sampling step does not automatically preserve it when the same randomness is reused. Reusing the same sampling randomness at repeated contexts can introduce bias into generation and produce repetitive text, degrading text quality \citep{dathathri2024scalable}. A common solution is to skip watermark embedding at repeated contexts and sample from the original LLM distribution instead \citep{hu2024unbiased,dathathri2024scalable,feng2025bimark,wang2026pura}. Increasing the context length also reduces the probability of repetition \citep{dathathri2024scalable}. These strategies help preserve text quality, but skipping repetitions sacrifices embedding opportunities, while longer contexts expose more watermark signals to each edit, potentially reducing robustness. ClockMark instead retains sampling progress and updates it according to the current LLM probabilities, allowing continued embedding at repeated contexts while preserving the output distribution.

\begin{figure}[t]
\centering
\includegraphics[width=\linewidth]{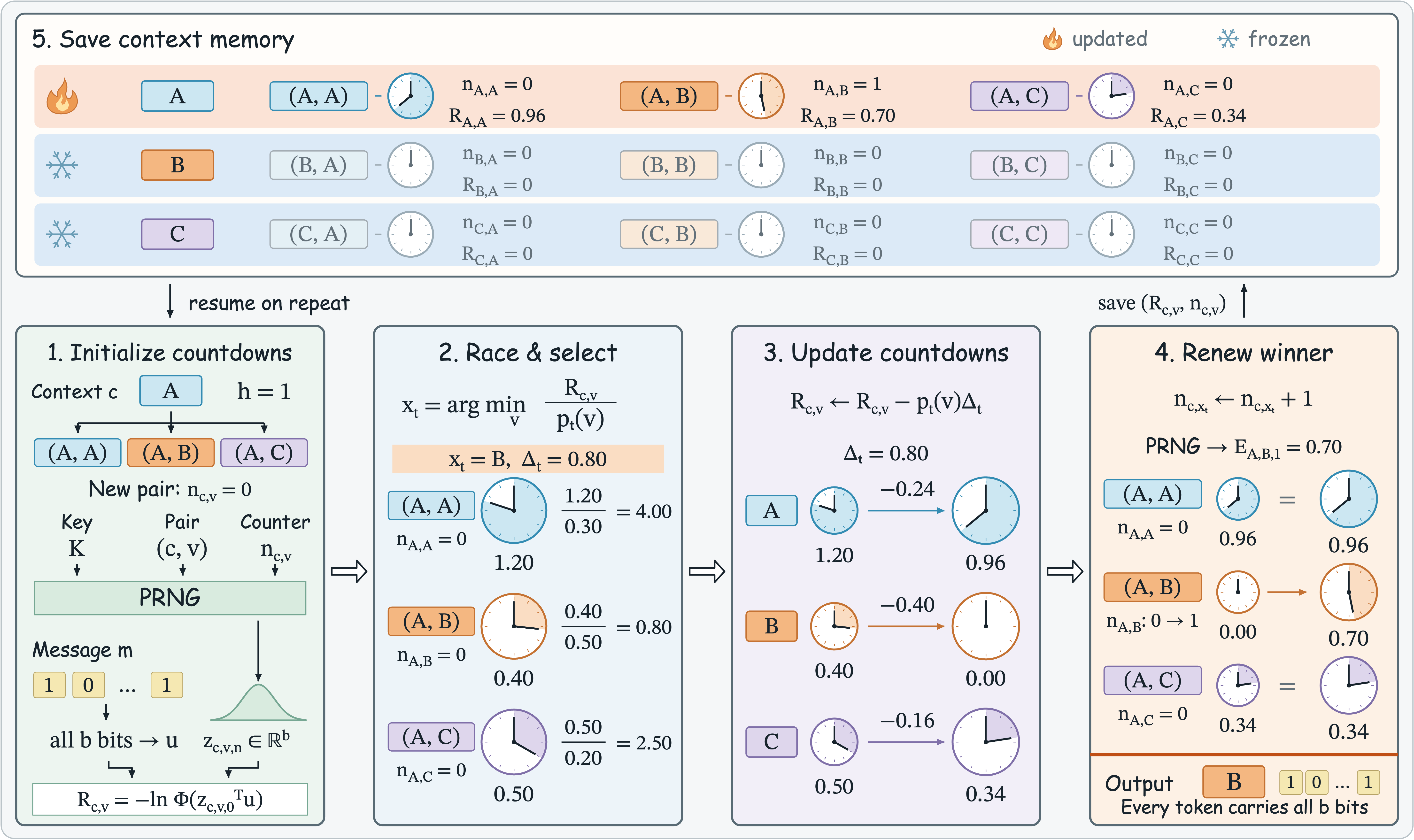}
\caption{ClockMark embedding. Each candidate countdown depends on all message bits and advances at its LLM probability. The sampler selects the first to finish, updates all active countdowns, and renews only the winner. Remaining countdowns are saved for repeated contexts. The indices $c$ and $v$ denote the context and candidate, and $n_{c,v}$ counts their pair selections.}
\label{fig:embedding}
\end{figure}

\section{Method}
\label{sec:method}\label{sec:framework}
ClockMark implements adaptive and host-dependent embedding through a sampling process that retains its state across repeated contexts. Message spreading makes each candidate token depend on all message bits. Adaptive sampling retains the progress of earlier sampling decisions, while host-dependent sampling uses the current LLM probabilities to select tokens and update that progress. Figure~\ref{fig:embedding} illustrates how these designs work together. The receiver reconstructs watermark signals from adjacent token pairs and sums them for both message recovery and watermark detection.

\subsection{Notation}
\label{sec:notation}
Let $\mathcal V$ be the vocabulary and $x_{1:T}=(x_1,\ldots,x_T)$ a generated sequence of $T$ tokens. At step $t$, the LLM $M$ assigns candidate token $v\in\mathcal V$ probability $P_M(v\mid x_{<t})$, where $x_{<t}=(x_1,\ldots,x_{t-1})$ is the preceding text. We omit the dependence on the fixed input prompt and write $p_t(v)=P_M(v\mid x_{<t})$ for brevity. The message is $m\in\{0,1\}^b$, where $b$ is the number of bits. The shared secret key $K\in\mathcal K$ is sampled from distribution $P_K$ over the key space $\mathcal K$. We write $P_{M,w}(x_{1:T}\mid m,K)$ for the probability of generating $x_{1:T}$ with the watermark. Given the received text $\widetilde x$ and $K$, the receiver recovers an estimate $\widehat m$ and detects the watermark. The sender and receiver use the same tokenizer and message length.

\subsection{Watermark Embedding}
\label{sec:intervals}\label{sec:sender}

\paragraph{Message Spreading.}
To let each watermarked token carry information about all message bits, we combine all bits into a score for each candidate token. We represent the message by a signed vector and normalize it:
\begin{equation}
 y=2m-\mathbf1,\qquad u(y)=\frac{y}{\sqrt b}.
 \label{eq:direction}
\end{equation}
Here $\mathbf1$ is the all-ones vector, so bit 0 maps to $-1$ and bit 1 to $+1$. The unit vector $u(y)$ specifies the message direction. We write $u=u(y)$ when the message is fixed.

At step $t\ge2$, let $c=x_{t-1}$ be the one-token watermark context and $v\in\mathcal V$ a candidate for the next token. For each pair $(c,v)$, the counter $n_{c,v}$ records how many times this pair has been selected and is initialized to zero. The key $K$, pair $(c,v)$, and current counter $n_{c,v}$ jointly determine the reproducible pseudorandom vector $z_{c,v,n_{c,v}}$. For any fixed counter value $n$, we model $z_{c,v,n}\sim\mathcal N(0,I_b)$, where $I_b$ is the $b\times b$ identity matrix. Vectors indexed by distinct $(c,v,n)$ are modeled as independent.

Each candidate token $v$ has a corresponding vector $z_{c,v,n}$. Its inner product $z_{c,v,n}^{\top}u$ measures alignment with the message direction $u$ and serves as the candidate score. Since $u$ represents all message bits, each candidate score depends on the entire message without position allocation.

\paragraph{Adaptive and Host-Dependent Sampling.}
We use these scores to construct a race among candidate tokens, where the first candidate to finish is selected. Each score determines a positive countdown interval:
\begin{equation}
 E_{c,v,n}(y)=-\ln\Phi\!\left(z_{c,v,n}^{\top}u(y)\right),
 \label{eq:gap}
\end{equation}
where $\Phi$ is the standard normal cumulative distribution function. A larger score produces a shorter countdown. We abbreviate $E_{c,v,n}(y)$ as $E_{c,v,n}$ when the message is fixed.

To implement adaptive embedding, ClockMark stores $(R_{c,v},n_{c,v})$ in a context memory, where $R_{c,v}$ is the remaining amount of the current countdown. Memory starts empty for each generated sequence. For the current context $c$, a new candidate with positive probability starts with $R_{c,v}=E_{c,v,0}$ and counter zero. Previously initialized candidates resume their saved countdowns. Thus earlier sampling progress is retained and used when a context reappears.

To implement host-dependent embedding, we use the current LLM probability $p_t(v)$ as the rate at which candidate $v$ decreases its remaining countdown. Its completion time is $R_{c,v}/p_t(v)$, so the selected token and elapsed race time are
\begin{equation}
 x_t=\argmin_{v:p_t(v)>0}\frac{R_{c,v}}{p_t(v)},
 \qquad
 \Delta_t=\frac{R_{c,x_t}}{p_t(x_t)}.
 \label{eq:race}
\end{equation}
This selection rule combines adaptive and host-dependent embedding through the saved countdown $R_{c,v}$ and the current probability $p_t(v)$.

All candidates in the race advance for the same duration $\Delta_t$. Since candidate $v$ advances at rate $p_t(v)$, its remaining countdown decreases by $p_t(v)\Delta_t$:
\begin{equation}
 R_{c,v}\leftarrow R_{c,v}-p_t(v)\Delta_t
 \quad\text{for all }v\text{ with }p_t(v)>0.
 \label{eq:advance}
\end{equation}
The selected countdown has now reached zero. Only the selected pair receives a fresh interval:
\begin{equation}
 n_{c,x_t}\leftarrow n_{c,x_t}+1,\qquad
 R_{c,x_t}\leftarrow E_{c,x_t,n_{c,x_t}}.
 \label{eq:renew}
\end{equation}
The incremented counter selects the next interval from Equation~\eqref{eq:gap}. Unselected candidates keep their counters and updated countdowns. Pairs belonging to other contexts and candidates with zero probability remain unchanged throughout the race.

We save the updated state for context $c$ and continue with $x_t$ as the next context. When $c$ reappears, its countdowns resume at rates set by the current LLM probabilities. The model probabilities therefore govern both token selection and the progress retained in memory. This allows ClockMark to use one-token contexts without skipping repetitions, limiting the number of subsequent watermark signals affected by an edit. These operations follow the initialization, race, countdown update, renewal, and memory storage shown in Figure~\ref{fig:embedding}. Algorithm~\ref{alg:sender} in Appendix~\ref{app:algorithm} gives the full procedure.

\begin{theorem}[Distribution preservation]
\label{thm:preservation}
For any fixed prompt and message $m\in\{0,1\}^b$, the probability of every token sequence $x_{1:T}\in\mathcal V^T$ is preserved in expectation over the secret key:
\begin{equation}
 \E_{K\sim P_K}\!\left[P_{M,w}(x_{1:T}\mid m,K)\right]
 =P_M(x_{1:T})=\prod_{t=1}^T p_t(x_t).
 \label{eq:preserve}
\end{equation}
\end{theorem}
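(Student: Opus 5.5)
The proof rests on a distributional invariant for the context memory: conditional on the generated history, every stored countdown is a fresh standard exponential. Distribution preservation then follows from the exponential-race lemma and the memoryless property.

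\textbf{Step 1: marginal law of one interval.} Throughout, I work under the paper's model that, as $K\sim P_K$, the vectors $z_{c,v,n}$ are i.i.d.\ $\mathcal N(0,I_b)$ across distinct indices $(c,v,n)$. Fix $m$, so $u=u(y)$ is a fixed unit vector. Then $z_{c,v,n}^{\top}u\sim\mathcal N(0,1)$, hence $\Phi(z_{c,v,n}^{\top}u)$ is uniform on $(0,1)$. It follows that $E_{c,v,n}=-\ln\Phi(z_{c,v,n}^{\top}u)\sim\Exp(1)$, and these intervals are independent across distinct indices.

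\textbf{Step 2: the invariant.} Let $\mathcal F_{t-1}$ be the $\sigma$-field generated by the history $(x_{<t},\Delta_{<t})$; note that $p_t(\cdot)$ is $\mathcal F_{t-1}$-measurable. Conceptually extend the memory so that every pair $(c,v)$ has a current countdown $R_{c,v}$: if the pair is not yet initialized, this is $E_{c,v,0}$. The invariant $(\ast)$ is:
\begin{quote}
conditional on $\mathcal F_{t-1}$, the family $\{R_{c,v}\}_{(c,v)}$ is i.i.d.\ $\Exp(1)$, and it is independent of all unused intervals $E_{c,v,n}$ with $n>n_{c,v}$.
\end{quote}
At the start of a sequence the memory is empty, so $(\ast)$ holds by Step 1. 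I will treat step $t=1$, which has no context, as sampling directly from $p_1$ (or by any rule whose output law is $p_1$ and which does not touch the memory). The sequence then factorizes as $\Prb(x_{1:T})=\prod_t \Prb(x_t\mid\mathcal F_{t-1})$, where I marginalize over the $\Delta$'s at the end.

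\textbf{Step 3: one race under $(\ast)$.} Let $S=\{v:p_t(v)>0\}$. The completion times $R_{c,v}/p_t(v)$ for $v\in S$ are independent $\Exp(p_t(v))$. By the standard competing-exponentials lemma:
\begin{itemize}
\item the argmin is $v$ with probability $p_t(v)/\sum_{w\in S}p_t(w)=p_t(v)$, and ties occur with probability zero;
\item the argmin is independent of $\Delta_t\sim\Exp(1)$.
\end{itemize}
Hence $\Prb(x_t=v\mid\mathcal F_{t-1})=p_t(v)$, which is the per-step factor needed.

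Restoring $(\ast)$ at $t+1$ goes pair by pair:
\begin{itemize}
\item \emph{Losers.} Conditional on $x_t=v^\ast$ and $\Delta_t=\delta$, each loser $w\neq v^\ast$ is known only to satisfy $R_{c,w}>p_t(w)\delta$. By memorylessness, the residuals $R_{c,w}-p_t(w)\delta$ are again i.i.d.\ $\Exp(1)$, independent of the winner's event. A direct density computation confirms this: the joint density factorizes as $p_t(v^\ast)e^{-\delta}\prod_{w\neq v^\ast}e^{-r_w}$.
\item \emph{Winner.} The winner receives $E_{c,v^\ast,n+1}$. This is an interval never used before, so by $(\ast)$ it is a fresh $\Exp(1)$ independent of everything in $\mathcal F_t$.
\item \emph{Untouched pairs.} Pairs in other contexts, and candidates with $p_t=0$, did not enter the event $\{x_t=v^\ast,\Delta_t\in d\delta\}$. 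They therefore retain their conditional law.
\end{itemize}
Together these give $(\ast)$ for $\mathcal F_t$.

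\textbf{Step 4: conclusion and main obstacle.} By induction, $\Prb(x_{1:T}\mid m)=\prod_t p_t(x_t)$ under the key distribution, which is exactly \eqref{eq:preserve}. The main obstacle is Step 3's conditioning argument when contexts repeat. The stored residuals have been conditioned on many past events: losing earlier races, possibly at different rates, interleaved with other contexts. One must check that this conditioning leaves no dependence on the history.

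My plan is to prove the one-step factorization as a lemma, via the explicit joint density given $\mathcal F_{t-1}$, and then apply it inductively. The key point is that the conditioning on $\mathcal F_{t-1}$ is already fully absorbed into $(\ast)$, so each race sees fresh exponentials regardless of how many times its context has appeared. One should also remark that the counter-based indexing guarantees no interval index is ever reused, which is what makes the winner's renewal fresh.
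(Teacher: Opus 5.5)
Your proposal is correct and follows essentially the same route as the paper. Both arguments use the invariant that, given past tokens and race times, all stored countdowns and unused intervals are independent $\Exp(1)$; the exponential-race lemma with the factorized density $p_t(v^\ast)e^{-\delta}\prod_{w\neq v^\ast}e^{-r_w}$ to restore the invariant for the losers; fresh renewal for the winner; and finally averaging over race times and applying the chain rule. One small tightening: your $(\ast)$ should state explicitly that the unused intervals are themselves i.i.d.\ $\Exp(1)$ conditional on $\mathcal F_{t-1}$, as the paper's invariant does. Your winner-renewal step relies on exactly this, and the clause ``independent of all unused intervals'' does not supply it.
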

Appendix~\ref{app:theory} gives the proof idea (Section~\ref{app:proof-idea}), the single-race calculation (Section~\ref{app:race-invariance}), and the sequence-level proof (Section~\ref{app:preservation}).

\subsection{Watermark Recovery}
\label{sec:evidence}\label{sec:recovery}
The receiver uses the shared key $K$ and received text $\widetilde x$ to reconstruct watermark signals without access to the LLM or the context memory used during embedding. As shown in Figure~\ref{fig:decoding}, it extracts distinct adjacent token pairs, reconstructs their initial vectors, and recovers the message from the signs of their sum.

\begin{figure}[t]
\centering
\includegraphics[width=\linewidth]{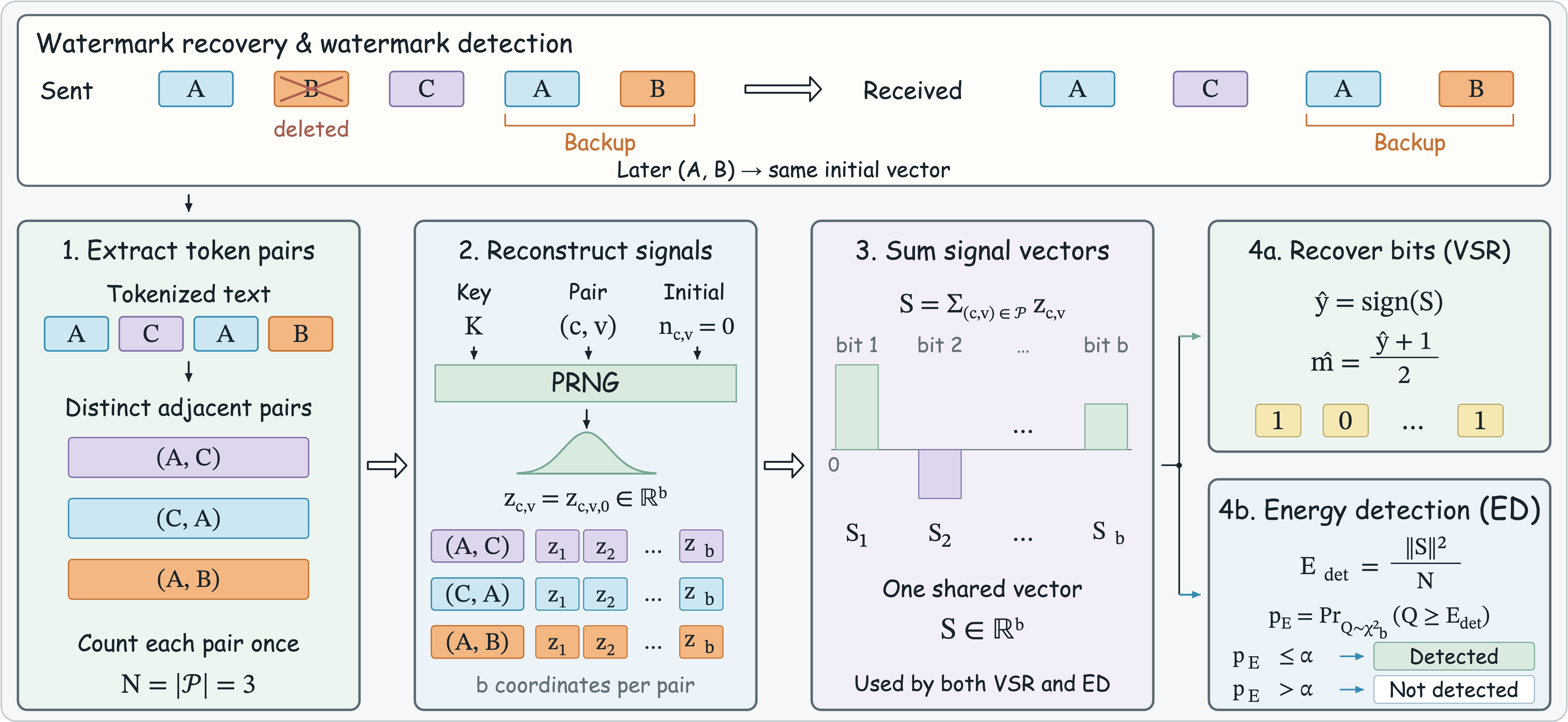}
\caption{ClockMark recovery and detection. The key reconstructs one initial vector for each of $N$ distinct adjacent token pairs. Their sum $S$ yields the message through coordinate signs (VSR) and a detection statistic through normalized energy $\|S\|^2/N$ (ED). A surviving $(A,B)$ preserves access to the same initial vector after deletion. Each pair is counted once.}
\label{fig:decoding}
\end{figure}

\paragraph{Token Pair Extraction.}
With the one-token context used during embedding, each watermark signal is associated with an adjacent token pair $(c,v)$. For received text of $\widetilde T$ tokens, the receiver collects
\begin{equation}
 \mathcal P(\widetilde x)=\{(\widetilde x_{t-1},\widetilde x_t):2\le t\le\widetilde T\},
 \qquad N=|\mathcal P(\widetilde x)|,
 \label{eq:evidence}
\end{equation}
where $N$ is the number of distinct pairs. We write $\mathcal P=\mathcal P(\widetilde x)$ below. The set also includes any pairs created by editing.

\paragraph{Watermark Signal Reconstruction.}
For each $(c,v)\in\mathcal P$, the receiver regenerates the initial vector $z_{c,v}:=z_{c,v,0}\in\R^b$ using the same pseudorandom generator as embedding, with inputs $K$, $(c,v)$, and counter value zero.

A pair first appears in the original generated text when its initial countdown completes. The receiver can retrieve that initial vector from any surviving occurrence of the pair, even if editing removes its first occurrence. Repeated pairs therefore provide backups for reconstructing watermark signals, as illustrated by ``$(A,B)$'' in Figure~\ref{fig:decoding}. Each pair contributes only once because its repeated occurrences reconstruct the same vector. Fixing the counter at zero also removes the need to reproduce the sampling history.

\paragraph{Vector-Sum Recovery.}
During embedding, a larger projection $z_{c,v,0}^{\top}u$ gives a shorter initial countdown and makes the pair more likely to occur. This favors vectors aligned with the message direction.

This motivates Vector-Sum Recovery (VSR):
\begin{equation}
 S=\sum_{(c,v)\in\mathcal P}z_{c,v},\qquad
 \widehat y=\sgn(S),\qquad \widehat m=(\widehat y+\mathbf1)/2.
 \label{eq:sum-recovery}
\end{equation}
Here $S\in\R^b$ is the vector sum and $\widehat y$ is the recovered signed message. The sign function acts coordinatewise, with $\sgn(0)=+1$, so a nonnegative coordinate yields bit 1 and a negative coordinate bit 0. Because message spreading associates every vector with all message bits, each pair contributes to all $b$ coordinates of $S$. The same sum $S$ supports watermark detection in Section~\ref{sec:detection}.

\subsection{Watermark Detection}
\label{sec:detection}
VSR returns a bit string even for unwatermarked text, so message recovery alone does not establish watermark presence. Energy Detection (ED) tests for a watermark using the same vector sum $S$, as shown in Figure~\ref{fig:decoding}. It uses the watermark signals from message spreading without knowing the original message or allocating a separate detection signal.

\paragraph{Detection Statistic.}
Each coordinate of $S$ aggregates the watermark signal for one message bit. Its sign determines the recovered bit, while its magnitude indicates the accumulated signal strength. ED combines the squared coordinates and normalizes by the number of distinct pairs:
\begin{equation}
 E_{\rm det}=\frac{\|S\|^2}{N}
 =\sum_{j=1}^{b}\left(\frac{S_j}{\sqrt N}\right)^2,
 \label{eq:energy}
\end{equation}
where $S_j$ is the $j$th coordinate of $S$, $\|S\|^2=\sum_{j=1}^{b}S_j^2$ is its squared Euclidean norm, and $N>0$ is the number of distinct pairs.

Under the unwatermarked null hypothesis $H_0$, the received text is independent of the secret key. We fix the received text and its distinct token pairs, and consider randomness over $K\sim P_K$. Under the Gaussian construction in Section~\ref{sec:intervals}, their reconstructed vector coordinates are independent standard normal variables. Each $S_j$ is therefore the sum of $N$ such variables, with mean zero and variance $N$. Dividing by $\sqrt N$ gives
\begin{equation}
 \frac{S_j}{\sqrt N}\sim\mathcal N(0,1),
 \qquad j=1,\ldots,b,
 \label{eq:energy-standardization}
\end{equation}
with independence across coordinates. Equation~\eqref{eq:energy} is thus the sum of squares of $b$ independent standard normal variables, which defines the chi-square distribution with $b$ degrees of freedom. Hence $E_{\rm det}\sim\chi_b^2$ under $H_0$, with the same null distribution for any $N>0$.

\paragraph{Detection Rule.}
We compute the upper-tail p-value
\begin{equation}
 p_E=\Prb_{Q\sim\chi_b^2}(Q\ge E_{\rm det}),
 \label{eq:detection-pvalue}
\end{equation}
where $Q$ follows the null distribution. This is the probability, under $H_0$, of obtaining an energy at least as large as the observed value. At significance level $\alpha\in(0,1)$, we declare the watermark present when $p_E\le\alpha$. Under $H_0$, this rule has a false-positive probability of at most $\alpha$, as proved in Appendix~\ref{app:energy}.

\section{Experiments}
\label{sec:experiments}

\subsection{Experimental Setup}
\paragraph{Models and Datasets}
We use Qwen2.5-7B-Instruct \citep{qwen2024qwen25} and Gemma-2-9B \citep{gemmateam2024gemma2improvingopen} for generation and evaluate on LFQA \citep{fan-etal-2019-eli5} and C4-RealNewsLike \citep{JMLR:v21:20-074}. We implement the experiments in PyTorch \citep{NEURIPS2019_bdbca288} and run them on four NVIDIA A40 GPUs.\\
\textbf{Baselines and Configuration}
We compare ClockMark with twelve multi-bit watermarks: BalanceMarking \citep{wang2024codable}, BiMark \citep{feng2025bimark}, CredID \citep{jiang2024credid}, MOSAIC \citep{agrawal2026mosaic}, MPAC \citep{yoo2024mpac}, PURA \citep{wang2026pura}, QuantileMark \citep{zhu2026quantilemark}, RSBH \citep{qu2025robust}, StealthInk \citep{jiang2025stealthink}, WaterMod \citep{park2026watermod}, WeaveMark \citep{park2026weavemark}, and XMark \citep{xu2026xmark}. Preserving text quality is essential for commercial LLM deployment, where watermarking should not compromise model performance or user experience. Gemini and recent Claude models also use distribution-preserving watermarking \citep{dathathri2024scalable}.\footnote{Claude deployment: \url{https://www.anthropic.com/news/claude-text-watermark}.} For a fair comparison, we select the strength parameters of biased baselines with the aim of keeping text quality at a comparable level. Appendix~\ref{app:protocol} lists the configurations in Appendix~\ref{app:baselines}. We evaluate robustness using three attacks. Deletion uniformly removes 10\%, 30\%, or 50\% of the generated tokens. Substitution follows the contextual word-replacement procedure of \citet{park2026weavemark}, using BERT suggestions and WordNet synonyms at the same rates. Paraphrasing uses DIPPER-XXL \citep{krishna2023dipper} with lexical and order diversity both set to 20.\\
\textbf{Evaluation Metrics}
Recovery is measured by bit accuracy (BA), the fraction of correctly recovered payload bits, with unrecovered bits counted as errors. Detection is measured by area under the ROC curve (AUC), which summarizes separation of watermarked and unwatermarked text across thresholds, and true positive rate (TPR) at a 1\% false positive rate (FPR), the fraction of watermarked texts correctly detected. For text quality, we report mean perplexity (PPL) under Pythia-1.4B-deduped \citep{pmlr-v202-biderman23a}, with lower values indicating a closer fit to its language distribution. 

\subsection{Text Quality, Recovery, and Detection}
Table~\ref{tab:clean} reports the Qwen comparison on C4. Appendix~\ref{app:full-results} provides Qwen results on LFQA (Appendix~\ref{app:clean-lfqa}) and Gemma results on both C4 and LFQA (Appendix~\ref{app:gemma}). For each model, dataset, and payload length, we generate 500 texts of 200 tokens per method.

\begin{table}[t]
\papertablestyle
\setlength{\tabcolsep}{4pt}
\caption{Clean Qwen results on C4. Bit accuracy (BA), area under the ROC curve (AUC), and true positive rate (TPR) at a 1\% false positive rate (FPR) are percentages. Bold marks the best displayed BA, AUC, and TPR. Baselines without reported detection appear first. Dashes indicate unavailable or inapplicable results.}\label{tab:clean}
\begin{tabular*}{\textwidth}{@{\extracolsep{\fill}}lrrrrrrrr@{}}
\toprule
\multicolumn{1}{c}{\multirow{2}{*}{Method}} & \multicolumn{4}{c}{\textbf{C4 (32 bits)}} & \multicolumn{4}{c}{\textbf{C4 (64 bits)}} \\
\cmidrule(lr){2-5}\cmidrule(lr){6-9}
 & PPL $\downarrow$ & BA $\uparrow$ & AUC $\uparrow$ & TPR $\uparrow$ & PPL $\downarrow$ & BA $\uparrow$ & AUC $\uparrow$ & TPR $\uparrow$ \\
\midrule
No watermark & 14.17 & -- & -- & -- & 14.17 & -- & -- & -- \\
\midrule
BalanceMarking & 14.35 & 78.74 & -- & -- & 14.11 & 62.34 & -- & -- \\
CredID & 15.76 & 79.06 & -- & -- & 15.38 & 64.49 & -- & -- \\
PURA & 14.46 & 98.00 & -- & -- & 14.15 & 93.07 & -- & -- \\
RSBH & 15.47 & 56.16 & -- & -- & 14.82 & 51.68 & -- & -- \\
WaterMod & 22.60 & 87.84 & -- & -- & 21.24 & 81.72 & -- & -- \\
\midrule
BiMark & 14.22 & 94.52 & 99.97 & 99.40 & 14.31 & 82.79 & 99.83 & 99.40 \\
MOSAIC & 16.50 & 58.59 & 54.92 & 10.40 & 16.13 & 51.83 & 49.97 & 0.00 \\
MPAC & 18.31 & 88.92 & 99.03 & 94.60 & 18.95 & 79.95 & 98.78 & 92.80 \\
QuantileMark & 14.09 & 93.11 & 99.02 & 95.60 & 13.95 & 84.62 & 99.23 & 89.40 \\
StealthInk & 14.41 & 86.78 & 98.51 & 88.60 & 14.07 & 78.46 & 96.29 & 74.60 \\
WeaveMark & 14.39 & 96.10 & 99.91 & 97.80 & 14.39 & 88.33 & 99.61 & 98.40 \\
XMark & 17.37 & 88.96 & 98.55 & 83.00 & 17.72 & 78.64 & 96.55 & 39.60 \\
\midrule
\textcolor{blue!45!black}{ClockMark} & 14.20 & \textbf{98.90} & \textbf{100.00} & \textbf{99.80} & 14.65 & \textbf{96.35} & \textbf{99.99} & \textbf{99.60} \\
\bottomrule
\end{tabular*}
\end{table}

\paragraph{Larger Payloads Widen the Recovery Lead}
ClockMark achieves the highest BA at both C4 payload lengths. Its lead over PURA, the strongest recovery baseline, grows from 0.90 to 3.29 percentage points as the payload increases from 32 to 64 bits, with 96.35\% BA at 64 bits. The LFQA lead similarly widens from 8.37 to 12.64 points (Table~\ref{tab:clean-lfqa}). PURA requires token probabilities from a source or proxy LLM for decoding \citep{wang2026pura}. ClockMark achieves these gains with model-agnostic recovery using only the received text and key.

\paragraph{Detection Remains Strong at Both Payload Lengths}
ClockMark achieves the highest AUC and TPR among methods with reported detection results on C4. TPR remains 99.80\% and 99.60\% at 32 and 64 bits, exceeding BiMark at 99.40\% for both. Detection therefore remains strong as the payload doubles. Although clean-text detection is near saturation for both methods, ClockMark combines this performance with higher recovery accuracy.

\paragraph{Recovery Gains Accompany Near-Baseline Perplexity}
ClockMark has PPL of 14.20 and 14.65 on C4 at 32 and 64 bits, respectively, compared with 14.17 for unwatermarked text. PURA and BiMark also remain close to the unwatermarked reference, so the recovery advantage extends to baselines with similar evaluator perplexity. On LFQA, PPL remains close to the unwatermarked reference of 11.05, at 10.97 and 11.07 for the two payload lengths (Table~\ref{tab:clean-lfqa}).

\paragraph{State-of-the-Art Performance on Gemma}
ClockMark achieves the highest BA in all four Gemma settings, reaching 99.57\% on LFQA and 99.52\% on C4 at 32 bits, and 97.33\% and 97.02\%, respectively, at 64 bits (Table~\ref{tab:gemma}). Detection remains near perfect at both payload lengths, with AUC above 99.99\% and TPR of 99.80\% on LFQA and 100.00\% on C4, matching the best reported TPR. PPL on LFQA and C4 is 13.22 and 12.78 at 32 bits and 13.20 and 12.90 at 64 bits, compared with unwatermarked values of 13.42 and 12.47. These results extend leading recovery and detection performance to another model family while maintaining generation quality.

\subsection{Robustness to Editing}
\label{sec:robustness}
We apply deletion, substitution, and paraphrasing to 200 token Qwen outputs with 32 bit payloads. Each dataset contains 500 matched texts. Table~\ref{tab:robust} reports C4, and Appendix~\ref{app:robust-lfqa} gives LFQA results. ClockMark achieves the highest BA and TPR in all seven attack conditions on both datasets. 
\begin{table}[t]
\papertablestyle
\setlength{\tabcolsep}{2pt}
\caption{C4 robustness for 32 bit Qwen watermarks. Bit accuracy (BA), area under the ROC curve (AUC), and true positive rate (TPR) at a 1\% false positive rate (FPR) are percentages. Bold marks the best displayed BA, AUC, and TPR. Baselines without reported detection appear first. Dashes indicate unavailable or inapplicable results.}\label{tab:robust}
\begin{tabular*}{\textwidth}{@{\extracolsep{\fill}}lrrrrrrrrr@{}}
\toprule
\multicolumn{1}{c}{\multirow{3}{*}[-4pt]{Method}} & \multicolumn{9}{c}{\textbf{Deletion}} \\
\cmidrule(lr){2-10}
 & \multicolumn{3}{c}{10\%} & \multicolumn{3}{c}{30\%} & \multicolumn{3}{c}{50\%} \\
\cmidrule(lr){2-4}\cmidrule(lr){5-7}\cmidrule(lr){8-10}
 & BA & AUC & TPR & BA & AUC & TPR & BA & AUC & TPR \\
\midrule
BalanceMarking & 58.69 & -- & -- & 38.40 & -- & -- & 24.74 & -- & -- \\
CredID & 60.93 & -- & -- & 38.20 & -- & -- & 25.27 & -- & -- \\
PURA & 90.17 & -- & -- & 69.14 & -- & -- & 56.56 & -- & -- \\
RSBH & 53.60 & -- & -- & 51.31 & -- & -- & 50.87 & -- & -- \\
WaterMod & 62.63 & -- & -- & 55.95 & -- & -- & 52.35 & -- & -- \\
\midrule
BiMark & 88.39 & 99.77 & 98.44 & 70.79 & 95.94 & 74.44 & 52.65 & 72.35 & 11.78 \\
MOSAIC & 52.56 & 50.04 & 0.67 & 51.06 & 49.82 & 0.22 & 50.94 & 49.93 & 0.44 \\
MPAC & 85.85 & 98.43 & 92.22 & 77.42 & 94.28 & 61.78 & 67.03 & 80.43 & 18.44 \\
QuantileMark & 77.06 & 91.38 & 49.78 & 56.97 & 55.11 & 2.00 & 52.19 & 52.02 & 0.67 \\
StealthInk & 78.50 & 89.99 & 40.22 & 61.78 & 58.29 & 2.44 & 53.37 & 47.07 & 0.44 \\
WeaveMark & 87.26 & 99.55 & 93.56 & 57.58 & 93.81 & 51.78 & 50.76 & 76.14 & 8.89 \\
XMark & 80.85 & 85.64 & 10.00 & 65.86 & 18.21 & 0.00 & 56.71 & 0.62 & 0.00 \\
\midrule
\textcolor{blue!45!black}{ClockMark} & \textbf{97.89} & \textbf{99.95} & \textbf{99.33} & \textbf{92.58} & \textbf{99.12} & \textbf{96.22} & \textbf{81.03} & \textbf{93.83} & \textbf{59.56} \\
\bottomrule
\end{tabular*}
\par\vspace{6pt}
\begin{tabular*}{\textwidth}{@{\extracolsep{\fill}}lrrrrrrrrrrrr@{}}
\toprule
\multicolumn{1}{c}{\multirow{3}{*}[-4pt]{Method}} & \multicolumn{9}{c}{\textbf{Substitution}} & \multicolumn{3}{c}{\multirow{2}{*}[-2pt]{\textbf{Paraphrasing}}} \\
\cmidrule(lr){2-10}
 & \multicolumn{3}{c}{10\%} & \multicolumn{3}{c}{30\%} & \multicolumn{3}{c}{50\%} & \multicolumn{3}{c}{} \\
\cmidrule(lr){2-4}\cmidrule(lr){5-7}\cmidrule(lr){8-10}\cmidrule(lr){11-13}
 & BA & AUC & TPR & BA & AUC & TPR & BA & AUC & TPR & BA & AUC & TPR \\
\midrule
BalanceMarking & 57.73 & -- & -- & 51.68 & -- & -- & 49.74 & -- & -- & 49.68 & -- & -- \\
CredID & 61.96 & -- & -- & 52.35 & -- & -- & 50.35 & -- & -- & 50.58 & -- & -- \\
PURA & 81.45 & -- & -- & 62.11 & -- & -- & 54.92 & -- & -- & 69.15 & -- & -- \\
RSBH & 52.14 & -- & -- & 50.97 & -- & -- & 50.31 & -- & -- & 51.05 & -- & -- \\
WaterMod & 58.43 & -- & -- & 54.16 & -- & -- & 52.40 & -- & -- & 53.17 & -- & -- \\
\midrule
BiMark & 82.36 & 98.97 & 92.89 & 63.12 & 83.05 & 22.67 & 52.26 & 64.40 & 6.22 & 67.42 & 89.18 & 48.00 \\
MOSAIC & 51.72 & 50.04 & 0.67 & 50.15 & 50.15 & 0.89 & 50.41 & 50.04 & 0.67 & 50.94 & 49.93 & 0.44 \\
MPAC & 80.62 & 96.12 & 74.44 & 67.55 & 78.56 & 16.22 & 59.47 & 56.93 & 3.11 & 68.94 & 77.92 & 22.89 \\
QuantileMark & 69.43 & 76.30 & 16.22 & 55.12 & 51.08 & 1.11 & 51.53 & 50.63 & 1.78 & 59.99 & 62.15 & 3.33 \\
StealthInk & 70.94 & 80.54 & 15.11 & 57.44 & 56.10 & 2.22 & 52.93 & 49.09 & 0.67 & 60.68 & 60.49 & 1.78 \\
WeaveMark & 73.12 & 98.39 & 82.00 & 51.58 & 83.21 & 24.00 & 50.06 & 66.66 & 6.44 & 53.90 & 87.47 & 37.11 \\
XMark & 74.50 & 83.15 & 8.67 & 60.86 & 55.82 & 0.44 & 53.99 & 49.29 & 1.11 & 61.50 & 50.01 & 1.78 \\
\midrule
\textcolor{blue!45!black}{ClockMark} & \textbf{95.65} & \textbf{99.73} & \textbf{96.89} & \textbf{83.24} & \textbf{95.34} & \textbf{69.78} & \textbf{69.66} & \textbf{75.58} & \textbf{13.78} & \textbf{85.23} & \textbf{96.41} & \textbf{73.33} \\
\bottomrule
\end{tabular*}
\end{table}

\paragraph{Recovery and Detection Withstand Token Deletion}
On C4, ClockMark retains 97.89\% BA and 99.33\% TPR after 10\% deletion. At 50\% deletion, BA remains 81.03\% and TPR 59.56\%, compared with 67.03\% and 18.44\% for MPAC, the strongest baseline for both metrics. The advantage therefore persists when half the tokens are removed. Surviving adjacent pairs remain usable for recovery and detection without their original positions.

\paragraph{Recovery Remains Stronger Under Substitution}
Substitution causes larger losses than deletion at the same rate. At 30\% substitution, C4 BA is 83.24\% and TPR is 69.78\%, compared with 92.58\% and 96.22\% after deletion. Nevertheless, ClockMark leads the baselines at every substitution rate. At 50\%, BA remains 69.66\% versus 59.47\% for MPAC, while TPR is 13.78\% versus 6.44\% for WeaveMark.

\FloatBarrier
\paragraph{Robustness Extends to Paraphrasing}
After DIPPER rewriting, ClockMark reaches 85.23\% BA on C4, exceeding PURA by 16.08 percentage points. AUC and TPR reach 96.41\% and 73.33\%, compared with 89.18\% and 48.00\% for BiMark, the strongest detection baseline. The gains in both recovery and detection extend the advantage beyond direct token edits.

\FloatBarrier
\section{Conclusion}
We presented an adaptive and host-dependent framework for robust, distribution-preserving multi-bit LLM watermarking. Its implementation, ClockMark, spreads all message bits across tokens and retains sampling progress to continue embedding at repeated one-token contexts. Probability-dependent sampling and state updates preserve the LLM output distribution, while a shared sum of reconstructed token-pair vectors supports recovery and detection. Experiments against twelve baselines show higher bit accuracy under the evaluated editing attacks while maintaining generation quality. These results support combining adaptive embedding with distribution preservation for robust text attribution.

\label{main-text-end}
\bibliography{references}

\begin{thebibliography}{36}
\providecommand{\natexlab}[1]{#1}
\providecommand{\url}[1]{\texttt{#1}}
\expandafter\ifx\csname urlstyle\endcsname\relax
  \providecommand{\doi}[1]{doi: #1}\else
  \providecommand{\doi}{doi: \begingroup \urlstyle{rm}\Url}\fi

\bibitem[Aaronson(2023)]{aaronson2023watermarking}
Scott Aaronson.
\newblock Watermarking of large language models.
\newblock Invited talk, Simons Institute for the Theory of Computing, August
  17, 2023.
\newblock URL
  \url{https://simons.berkeley.edu/talks/scott-aaronson-ut-austin-openai-2023-08-17}.
\newblock Accessed September 26, 2026.

\bibitem[Agrawal(2026)]{agrawal2026mosaic}
Naman Agrawal.
\newblock {MOSAIC}: Set-based multi-bit watermarking for language models.
\newblock Author-uploaded preprint, 2026.
\newblock URL
  \url{https://www.researchgate.net/publication/410836411_MOSAIC_Set-Based_Multi-Bit_Watermarking_for_Language_Models}.

\bibitem[Anderson(2007)]{anderson2007modified}
David~F Anderson.
\newblock A modified next reaction method for simulating chemical systems with
  time dependent propensities and delays.
\newblock \emph{The Journal of Chemical Physics}, 127\penalty0 (21):\penalty0
  214107, 2007.
\newblock \doi{10.1063/1.2799998}.
\newblock URL \url{https://arxiv.org/abs/0708.0370}.

\bibitem[Biderman et~al.(2023)Biderman, Schoelkopf, Anthony, Bradley, O'Brien,
  Hallahan, Khan, Purohit, Prashanth, Raff, et~al.]{pmlr-v202-biderman23a}
Stella Biderman, Hailey Schoelkopf, Quentin~Gregory Anthony, Herbie Bradley,
  Kyle O'Brien, Eric Hallahan, Mohammad~Aflah Khan, Shivanshu Purohit,
  USVSN~Sai Prashanth, Edward Raff, et~al.
\newblock Pythia: A suite for analyzing large language models across training
  and scaling.
\newblock In \emph{International Conference on Machine Learning}, pages
  2397--2430. PMLR, 2023.
\newblock URL \url{https://proceedings.mlr.press/v202/biderman23a.html}.

\bibitem[Chen et~al.(2025)Chen, Wu, Guo, and Huang]{chen2025mcmark}
Ruibo Chen, Yihan Wu, Junfeng Guo, and Heng Huang.
\newblock Improved unbiased watermark for large language models.
\newblock In \emph{Proceedings of the 63rd Annual Meeting of the Association
  for Computational Linguistics (Volume 1: Long Papers)}, pages 20587--20601,
  2025.
\newblock \doi{10.18653/v1/2025.acl-long.1005}.
\newblock URL \url{https://aclanthology.org/2025.acl-long.1005/}.

\bibitem[Cui et~al.(2026)Cui, Chen, Wu, and Huang]{cui2026mc2mark}
Xuehao Cui, Ruibo Chen, Yihan Wu, and Heng Huang.
\newblock {MC$^2$Mark}: Distortion-free multi-bit watermarking for long
  messages.
\newblock \emph{arXiv preprint arXiv:2602.14030}, 2026.
\newblock URL \url{https://arxiv.org/abs/2602.14030v1}.

\bibitem[Dathathri et~al.(2024)Dathathri, See, Ghaisas, Huang, McAdam, Welbl,
  Bachani, Kaskasoli, Stanforth, Matejovicova, et~al.]{dathathri2024scalable}
Sumanth Dathathri, Abigail See, Sumedh Ghaisas, Po-Sen Huang, Rob McAdam,
  Johannes Welbl, Vandana Bachani, Alex Kaskasoli, Robert Stanforth, Tatiana
  Matejovicova, et~al.
\newblock Scalable watermarking for identifying large language model outputs.
\newblock \emph{Nature}, 634\penalty0 (8035):\penalty0 818--823, 2024.
\newblock \doi{10.1038/s41586-024-08025-4}.
\newblock URL \url{https://www.nature.com/articles/s41586-024-08025-4}.

\bibitem[Fan et~al.(2019)Fan, Jernite, Perez, Grangier, Weston, and
  Auli]{fan-etal-2019-eli5}
Angela Fan, Yacine Jernite, Ethan Perez, David Grangier, Jason Weston, and
  Michael Auli.
\newblock {ELI5}: Long form question answering.
\newblock In \emph{Proceedings of the 57th annual meeting of the association
  for computational linguistics}, pages 3558--3567, 2019.
\newblock \doi{10.18653/v1/P19-1346}.
\newblock URL \url{https://aclanthology.org/P19-1346/}.

\bibitem[Feng et~al.(2025)Feng, Zhang, Zhang, Zhang, and Pan]{feng2025bimark}
Xiaoyan Feng, He~Zhang, Yanjun Zhang, Leo~Yu Zhang, and Shirui Pan.
\newblock {BiMark}: Unbiased multilayer watermarking for large language models.
\newblock In \emph{International Conference on Machine Learning}, pages
  17049--17067. PMLR, 2025.
\newblock URL \url{https://proceedings.mlr.press/v267/feng25u.html}.

\bibitem[Fernandez et~al.(2023)Fernandez, Chaffin, Tit, Chappelier, and
  Furon]{fernandez2023three}
Pierre Fernandez, Antoine Chaffin, Karim Tit, Vivien Chappelier, and Teddy
  Furon.
\newblock Three bricks to consolidate watermarks for large language models.
\newblock In \emph{2023 IEEE International Workshop on Information Forensics
  and Security (WIFS)}, pages 1--6. IEEE, 2023.

\bibitem[{Gemma Team} et~al.(2024){Gemma Team}, Riviere, Pathak, Sessa, Hardin,
  Bhupatiraju, Hussenot, Mesnard, Shahriari, Ram{\'e},
  et~al.]{gemmateam2024gemma2improvingopen}
{Gemma Team}, Morgane Riviere, Shreya Pathak, Pier~Giuseppe Sessa, Cassidy
  Hardin, Surya Bhupatiraju, L{\'e}onard Hussenot, Thomas Mesnard, Bobak
  Shahriari, Alexandre Ram{\'e}, et~al.
\newblock {Gemma 2}: Improving open language models at a practical size.
\newblock \emph{arXiv preprint arXiv:2408.00118}, 2024.
\newblock URL \url{https://arxiv.org/abs/2408.00118}.

\bibitem[Gilani et~al.(2026)Gilani, Vithana, Long, Kosut, Sankar, and
  Calmon]{gilani2026arcmark}
Atefeh Gilani, Sajani Vithana, Carol~Xuan Long, Oliver Kosut, Lalitha Sankar,
  and Flavio~P Calmon.
\newblock {ArcMark}: Distortion-free multi-byte {LLM} watermark via optimal
  transport.
\newblock \emph{arXiv preprint arXiv:2602.07235}, 2026.
\newblock URL \url{https://arxiv.org/abs/2602.07235v2}.

\bibitem[Gloaguen et~al.(2026)Gloaguen, Staab, Vero, and
  Vechev]{gloaguen2026everybit}
Thibaud Gloaguen, Robin Staab, Mark Vero, and Martin Vechev.
\newblock Every bit, everywhere, all at once: A binomial multibit {LLM}
  watermark.
\newblock \emph{arXiv preprint arXiv:2605.11653}, 2026.
\newblock URL \url{https://arxiv.org/abs/2605.11653v1}.

\bibitem[Hu et~al.(2024)Hu, Chen, Wu, Wu, Zhang, and Huang]{hu2024unbiased}
Zhengmian Hu, Lichang Chen, Xidong Wu, Yihan Wu, Hongyang Zhang, and Heng
  Huang.
\newblock Unbiased watermark for large language models.
\newblock In \emph{International Conference on Learning Representations}, 2024.
\newblock URL \url{https://arxiv.org/abs/2310.10669}.

\bibitem[Jakesch et~al.(2023)Jakesch, Hancock, and Naaman]{jakesch2023human}
Maurice Jakesch, Jeffrey~T Hancock, and Mor Naaman.
\newblock Human heuristics for {AI}-generated language are flawed.
\newblock \emph{Proceedings of the National Academy of Sciences}, 120\penalty0
  (11):\penalty0 e2208839120, 2023.
\newblock \doi{10.1073/pnas.2208839120}.

\bibitem[Jiang et~al.(2026{\natexlab{a}})Jiang, Lei, Wang, Wang, and
  Yi]{jiang2024credid}
Haoyu Jiang, Shanzhe Lei, Yingchun Wang, Xuhong Wang, and Ping Yi.
\newblock {CredID}: Credible multi-bit watermark for large language models
  identification.
\newblock In \emph{ICASSP 2026-2026 IEEE International Conference on Acoustics,
  Speech and Signal Processing (ICASSP)}, pages 13962--13966. IEEE,
  2026{\natexlab{a}}.
\newblock URL
  \url{https://www.cmsworkshops.com/ICASSP2026/view_paper.php?PaperNum=8760}.

\bibitem[Jiang et~al.(2025)Jiang, Wu, Boroujeny, Mark, and
  Zeng]{jiang2025stealthink}
Ya~Jiang, Chuxiong Wu, Massieh~Kordi Boroujeny, Brian Mark, and Kai Zeng.
\newblock {StealthInk}: A multi-bit and stealthy watermark for large language
  models.
\newblock In \emph{International Conference on Machine Learning}, pages
  27685--27709. PMLR, 2025.
\newblock URL \url{https://proceedings.mlr.press/v267/jiang25j.html}.

\bibitem[Jiang et~al.(2026{\natexlab{b}})Jiang, Boroujeny, Kumar, and
  Zeng]{jiang2026mirrormark}
Ya~Jiang, Massieh~Kordi Boroujeny, Surender~Suresh Kumar, and Kai Zeng.
\newblock {MirrorMark}: Generalizable mirrored sampling for multi-bit {LLM}
  watermarking.
\newblock \emph{arXiv preprint arXiv:2601.22246}, 2026{\natexlab{b}}.
\newblock URL \url{https://arxiv.org/abs/2601.22246v3}.

\bibitem[Kirchenbauer et~al.(2023)Kirchenbauer, Geiping, Wen, Katz, Miers, and
  Goldstein]{kirchenbauer2023watermark}
John Kirchenbauer, Jonas Geiping, Yuxin Wen, Jonathan Katz, Ian Miers, and Tom
  Goldstein.
\newblock A watermark for large language models.
\newblock In \emph{International Conference on Machine Learning}, pages
  17061--17084. PMLR, 2023.
\newblock URL \url{https://proceedings.mlr.press/v202/kirchenbauer23a.html}.

\bibitem[Krishna et~al.(2023)Krishna, Song, Karpinska, Wieting, and
  Iyyer]{krishna2023dipper}
Kalpesh Krishna, Yixiao Song, Marzena Karpinska, John Wieting, and Mohit Iyyer.
\newblock Paraphrasing evades detectors of {AI}-generated text, but retrieval
  is an effective defense.
\newblock In \emph{Advances in Neural Information Processing Systems},
  volume~36, pages 27469--27500, 2023.
\newblock URL \url{https://arxiv.org/abs/2303.13408}.

\bibitem[Kuditipudi et~al.(2024)Kuditipudi, Thickstun, Hashimoto, and
  Liang]{kuditipudi2024distortionfree}
Rohith Kuditipudi, John Thickstun, Tatsunori Hashimoto, and Percy Liang.
\newblock Robust distortion-free watermarks for language models.
\newblock \emph{Transactions on Machine Learning Research}, 2024.
\newblock URL \url{https://openreview.net/forum?id=FpaCL1MO2C}.

\bibitem[Maddison(2016)]{maddison2016poisson}
Chris~J Maddison.
\newblock A {Poisson} process model for {Monte Carlo}.
\newblock In Tamir Hazan, George Papandreou, and Daniel Tarlow, editors,
  \emph{Perturbations, Optimization, and Statistics}, pages 193--232. MIT
  Press, 2016.
\newblock URL \url{https://arxiv.org/abs/1602.05986}.

\bibitem[Moulin and O'Sullivan(2003)]{moulin2003information}
Pierre Moulin and Joseph~A O'Sullivan.
\newblock Information-theoretic analysis of information hiding.
\newblock \emph{IEEE Transactions on information theory}, 49\penalty0
  (3):\penalty0 563--593, 2003.

\bibitem[Park et~al.(2026{\natexlab{a}})Park, Lee, Kwak, and
  Yun]{park2026weavemark}
Gang-Hyun Park, Ju-Hyeong Lee, Hee-Youl Kwak, and Dae-Young Yun.
\newblock {WeaveMark}: Robust and scalable multi-bit {LLM} watermarking via
  coded payload spreading.
\newblock \emph{arXiv preprint arXiv:2609.02177}, 2026{\natexlab{a}}.
\newblock URL \url{https://arxiv.org/abs/2609.02177v1}.

\bibitem[Park et~al.(2026{\natexlab{b}})Park, Park, Ahn, and
  Han]{park2026watermod}
Shinwoo Park, Hyejin Park, Hyeseon Ahn, and Yo-Sub Han.
\newblock {WaterMod}: Modular token-rank partitioning for probability-balanced
  {LLM} watermarking.
\newblock \emph{Proceedings of the AAAI Conference on Artificial Intelligence},
  40\penalty0 (39):\penalty0 32683--32691, 2026{\natexlab{b}}.
\newblock \doi{10.1609/aaai.v40i39.40546}.
\newblock URL \url{https://ojs.aaai.org/index.php/AAAI/article/view/40546}.

\bibitem[Paszke et~al.(2019)Paszke, Gross, Massa, Lerer, Bradbury, Chanan,
  Killeen, Lin, Gimelshein, Antiga, et~al.]{NEURIPS2019_bdbca288}
Adam Paszke, Sam Gross, Francisco Massa, Adam Lerer, James Bradbury, Gregory
  Chanan, Trevor Killeen, Zeming Lin, Natalia Gimelshein, Luca Antiga, et~al.
\newblock {PyTorch}: An imperative style, high-performance deep learning
  library.
\newblock In \emph{Advances in Neural Information Processing Systems},
  volume~32, 2019.
\newblock URL
  \url{https://proceedings.neurips.cc/paper_files/paper/2019/file/bdbca288fee7f92f2bfa9f7012727740-Paper.pdf}.

\bibitem[Porter and Machery(2024)]{porter2024poetry}
Brian Porter and Edouard Machery.
\newblock {AI}-generated poetry is indistinguishable from human-written poetry
  and is rated more favorably.
\newblock \emph{Scientific Reports}, 14\penalty0 (1):\penalty0 26133, 2024.
\newblock \doi{10.1038/s41598-024-76900-1}.

\bibitem[Qu et~al.(2025)Qu, Zheng, Tao, Yin, Jiang, Tian, Zou, Jia, and
  Zhang]{qu2025robust}
Wenjie Qu, Wengrui Zheng, Tianyang Tao, Dong Yin, Yanze Jiang, Zhihua Tian, Wei
  Zou, Jinyuan Jia, and Jiaheng Zhang.
\newblock Provably robust multi-bit watermarking for {AI}-generated text.
\newblock In \emph{34th USENIX Security Symposium (USENIX Security 25)}, pages
  201--220, 2025.
\newblock URL
  \url{https://www.usenix.org/conference/usenixsecurity25/presentation/qu-watermarking}.

\bibitem[{Qwen} et~al.(2024){Qwen}, Yang, Yang, Zhang, Hui, Zheng, Yu, Li, Liu,
  Huang, Wei, et~al.]{qwen2024qwen25}
{Qwen}, An~Yang, Baosong Yang, Beichen Zhang, Binyuan Hui, Bo~Zheng, Bowen Yu,
  Chengyuan Li, Dayiheng Liu, Fei Huang, Haoran Wei, et~al.
\newblock {Qwen2.5} technical report.
\newblock \emph{arXiv preprint arXiv:2412.15115}, 2024.
\newblock URL \url{https://arxiv.org/abs/2412.15115v2}.
\newblock Revised January 3, 2025, version 2.

\bibitem[Raffel et~al.(2020)Raffel, Shazeer, Roberts, Lee, Narang, Matena,
  Zhou, Li, and Liu]{JMLR:v21:20-074}
Colin Raffel, Noam Shazeer, Adam Roberts, Katherine Lee, Sharan Narang, Michael
  Matena, Yanqi Zhou, Wei Li, and Peter~J Liu.
\newblock Exploring the limits of transfer learning with a unified text-to-text
  transformer.
\newblock \emph{Journal of Machine Learning Research}, 21\penalty0
  (140):\penalty0 1--67, 2020.
\newblock URL \url{http://jmlr.org/papers/v21/20-074.html}.

\bibitem[Wang et~al.(2024)Wang, Yang, Chen, Zhou, Lin, Meng, Zhou, and
  Sun]{wang2024codable}
Lean Wang, Wenkai Yang, Deli Chen, Hao Zhou, Yankai Lin, Fandong Meng, Jie
  Zhou, and Xu~Sun.
\newblock Towards codable watermarking for injecting multi-bits information to
  {LLMs}.
\newblock In \emph{International Conference on Learning Representations}, 2024.
\newblock URL \url{https://openreview.net/forum?id=JYu5Flqm9D}.

\bibitem[Wang et~al.(2026)Wang, Guo, Du, Wang, Yao, Hu, Zhang, Yu, and
  Chen]{wang2026pura}
Yaofei Wang, Jinyang Guo, Shuchao Du, Chao Wang, Qiyi Yao, Donghui Hu, Weiming
  Zhang, Nenghai Yu, and Kejiang Chen.
\newblock {PURA}: Provably unbiased and robust multi-bit watermarking for
  {AI}-generated text attribution.
\newblock \emph{arXiv preprint arXiv:2608.22218}, 2026.
\newblock URL \url{https://arxiv.org/abs/2608.22218v2}.
\newblock Accepted to ACM CCS 2026; arXiv version 2.

\bibitem[Wu et~al.(2024)Wu, Hu, Guo, Zhang, and Huang]{wu2024dipmark}
Yihan Wu, Zhengmian Hu, Junfeng Guo, Hongyang Zhang, and Heng Huang.
\newblock A resilient and accessible distribution-preserving watermark for
  large language models.
\newblock In \emph{International Conference on Machine Learning}, pages
  53443--53470. PMLR, 2024.
\newblock URL \url{https://proceedings.mlr.press/v235/wu24h.html}.

\bibitem[Xu et~al.(2026)Xu, Hu, Kotevska, and Zhang]{xu2026xmark}
Jiahao Xu, Rui Hu, Olivera Kotevska, and Zikai Zhang.
\newblock {XMark}: Reliable multi-bit watermarking for {LLM}-generated texts.
\newblock In \emph{Proceedings of the 64th Annual Meeting of the Association
  for Computational Linguistics (Volume 1: Long Papers)}, pages 14747--14763,
  2026.
\newblock \doi{10.18653/v1/2026.acl-long.672}.
\newblock URL \url{https://aclanthology.org/2026.acl-long.672/}.

\bibitem[Yoo et~al.(2024)Yoo, Ahn, and Kwak]{yoo2024mpac}
KiYoon Yoo, Wonhyuk Ahn, and Nojun Kwak.
\newblock Advancing beyond identification: Multi-bit watermark for large
  language models.
\newblock In \emph{Proceedings of the 2024 Conference of the North American
  Chapter of the Association for Computational Linguistics: Human Language
  Technologies (Volume 1: Long Papers)}, pages 4031--4055, 2024.
\newblock \doi{10.18653/v1/2024.naacl-long.224}.
\newblock URL \url{https://aclanthology.org/2024.naacl-long.224/}.

\bibitem[Zhu et~al.(2026)Zhu, Huang, and Wan]{zhu2026quantilemark}
Junlin Zhu, Baizhou Huang, and Xiaojun Wan.
\newblock {QuantileMark}: A message-symmetric multi-bit watermark for {LLMs}.
\newblock In \emph{Proceedings of the 64th Annual Meeting of the Association
  for Computational Linguistics (Volume 1: Long Papers)}, pages 6790--6806,
  2026.
\newblock \doi{10.18653/v1/2026.acl-long.308}.
\newblock URL \url{https://aclanthology.org/2026.acl-long.308/}.

\end{thebibliography}
\bibliographystyle{plainnat}
\clearpage
\appendix
\section{Watermarking Algorithms}
\label{app:algorithm}\label{app:implementation}
\begingroup
\raggedbottom

\begin{algorithm}[H]
\caption{ClockMark watermark embedding}\label{alg:sender}
\begin{algorithmic}[1]
\Require LLM $M$ with a fixed input prompt, message $m\in\{0,1\}^b$, secret key $K\in\mathcal K$, generation length $T$
\Ensure Watermarked token sequence $x_{1:T}$
\State $y\gets2m-\mathbf1$, $u\gets y/\sqrt b$ \Comment{Message spreading}
\State Initialize an empty context memory storing $(R_{c,v},n_{c,v})$
\State Sample $x_1\sim P_M(\cdot\mid x_{<1})$
\For{$t=2,\ldots,T$}
 \State $c\gets x_{t-1}$
 \State $p_t(\cdot)\gets P_M(\cdot\mid x_{<t})$
 \For{$v\in\mathcal V$ with $p_t(v)>0$}
  \If{$(c,v)$ has no saved state}
   \State $n_{c,v}\gets0$
   \State Generate $z_{c,v,0}\in\R^b$ from $K$, $(c,v)$, and counter $0$
   \State $R_{c,v}\gets E_{c,v,0}=-\ln\Phi(z_{c,v,0}^{\top}u)$
  \Else
   \State Load the saved $(R_{c,v},n_{c,v})$ \Comment{Resume sampling progress}
  \EndIf
 \EndFor
 \State $x_t\gets\argmin_{v:p_t(v)>0}R_{c,v}/p_t(v)$ \Comment{Host-dependent race}
 \State $\Delta_t\gets R_{c,x_t}/p_t(x_t)$
 \For{$v\in\mathcal V$ with $p_t(v)>0$}
  \State $R_{c,v}\gets R_{c,v}-p_t(v)\Delta_t$ \Comment{Update all active countdowns}
 \EndFor
 \State $n_{c,x_t}\gets n_{c,x_t}+1$
 \State Generate $z_{c,x_t,n_{c,x_t}}\in\R^b$ from $K$, $(c,x_t)$, and counter $n_{c,x_t}$
 \State $R_{c,x_t}\gets E_{c,x_t,n_{c,x_t}}=-\ln\Phi(z_{c,x_t,n_{c,x_t}}^{\top}u)$ \Comment{Renew only the winner}
 \State Save the updated $(R_{c,v},n_{c,v})$ for context $c$ \Comment{Adaptive memory}
\EndFor
\State \Return $x_{1:T}$
\end{algorithmic}
\end{algorithm}

\begin{algorithm}[H]
\caption{ClockMark watermark recovery (VSR)}\label{alg:recovery}
\begin{algorithmic}[1]
\Require Received token sequence $\widetilde x_{1:\widetilde T}$, secret key $K\in\mathcal K$, message length $b$
\Ensure Recovered message $\widehat m$, vector sum $S\in\R^b$, distinct pair count $N$
\State $\mathcal P\gets\{(\widetilde x_{t-1},\widetilde x_t):2\le t\le\widetilde T\}$ \Comment{Count each pair once}
\State $N\gets|\mathcal P|$, $S\gets\mathbf0\in\R^b$
\For{$(c,v)\in\mathcal P$}
 \State Regenerate $z_{c,v,0}\in\R^b$ from $K$, $(c,v)$, and counter $0$
 \State $z_{c,v}\gets z_{c,v,0}$
 \State $S\gets S+z_{c,v}$
\EndFor
\State $\widehat y\gets\sgn(S)$, with $\sgn(0)=+1$ coordinatewise
\State $\widehat m\gets(\widehat y+\mathbf1)/2$
\State \Return $\widehat m$, $S$, $N$
\end{algorithmic}
\end{algorithm}

\clearpage
\begin{algorithm}[H]
\caption{ClockMark watermark detection (ED)}\label{alg:detection}
\begin{algorithmic}[1]
\Require Vector sum $S$ and distinct pair count $N>0$ from Algorithm~\ref{alg:recovery}, message length $b$, significance level $\alpha\in(0,1)$
\Ensure Detection decision and p-value $p_E$
\State $E_{\rm det}\gets\|S\|^2/N$
\State $p_E\gets\Prb_{Q\sim\chi_b^2}(Q\ge E_{\rm det})$
\If{$p_E\le\alpha$}
 \State \Return Watermark detected, $p_E$
\Else
 \State \Return Watermark not detected, $p_E$
\EndIf
\end{algorithmic}
\end{algorithm}
\endgroup

\section{Distribution Preservation}
\label{app:theory}
We prove Theorem~\ref{thm:preservation} for a fixed prompt, message $m\in\{0,1\}^b$, and generation length $T$, chosen independently of the key.

\subsection{Proof Idea and Invariant}
\label{app:proof-idea}
The proof connects token selection to sequence preservation through one invariant: conditional on previously selected tokens and elapsed race times, all stored remaining countdowns and all unused intervals are mutually independent $\Exp(1)$ variables. Here $\Exp(1)$ is the exponential distribution with rate one and mean one. This law ensures that each candidate wins with its current LLM probability. Showing that the countdown update preserves it then justifies reusing saved progress throughout generation.

The remaining amount $R_{c,v}$ is an internal countdown, whereas $R_{c,v}/p_t(v)$ is its completion time at the current positive rate. This distinction allows countdowns to pause and resume at different rates, as in the internal-time construction of \citet{anderson2007modified}.

\subsection{Exponential-Race Invariance}
\label{app:race-invariance}
Under the independent Gaussian model in Section~\ref{sec:intervals}, the unit message direction $u=(2m-\mathbf1)/\sqrt b$ gives $z_{c,v,n}^{\top}u\sim\mathcal N(0,1)$. The probability integral transform therefore yields
\begin{equation}
 \Phi(z_{c,v,n}^{\top}u)\sim\operatorname{Unif}(0,1),
 \qquad E_{c,v,n}=-\ln\Phi(z_{c,v,n}^{\top}u)\sim\Exp(1).
 \label{eq:initial-exponential-law}
\end{equation}
Intervals for distinct $(c,v,n)$ are independent, establishing the law for initial and fresh countdowns.

For a single race, index candidates with positive probability by $i$, with $\sum_i p_i=1$. If $R_i$ are independent $\Exp(1)$ variables, then $R_i/p_i\sim\Exp(p_i)$ has rate $p_i$. The standard exponential-race identity gives winner probability $p_i$ \citep[Lemma~7.6]{maddison2016poisson}. To verify that saving the unselected countdowns preserves their joint law, define
\begin{equation}
 J=\argmin_i\frac{R_i}{p_i},\qquad
 \Delta=\min_i\frac{R_i}{p_i},\qquad
 R_i'=R_i-p_i\Delta\quad(i\ne J).
 \label{eq:single-race-variables}
\end{equation}
For winner $j$, elapsed time $\delta>0$, and residual values $r_i'>0$, write $R_j=p_j\delta$ and $R_i=p_i\delta+r_i'$ for $i\ne j$. This transformation has absolute Jacobian $p_j$, and $\sum_iR_i=\delta+\sum_{i\ne j}r_i'$. The joint law, with probability mass in $j$, factors as
\begin{equation}
 f(j,\delta,(r_i')_{i\ne j})
 =\underbrace{p_j}_{\text{winner}}\,
  \underbrace{e^{-\delta}}_{\text{race time}}\,
  \underbrace{\prod_{i\ne j}e^{-r_i'}}_{\text{unselected residuals}}.
 \label{eq:race-factorization}
\end{equation}
The factors give the winner probabilities, an independent $\Exp(1)$ race time, and independent $\Exp(1)$ residuals conditional on both the winner and time. This is the joint residual invariance needed for reuse. Giving the winner a fresh independent interval restores the product exponential law for the entire updated vector, independently of $(J,\Delta)$.

\subsection{Sequence-Level Preservation}
\label{app:preservation}
Let $X_t$ denote the token sampled at step $t$. The history $\mathcal H_{t-1}$ contains the tokens and race times observed before step $t$, with the prompt and message fixed. It excludes the key, residual values, and unused vectors. Conditional on this history, the context and probabilities $p_t(v)=P_M(v\mid X_{<t})$ are fixed, while the remaining countdowns are random.

\begin{proof}[Proof of Theorem~\ref{thm:preservation}]
The first token is sampled from $P_M$ independently of the keyed vectors. Initial countdowns therefore establish the invariant before the first race. Assume this invariant holds conditional on $\mathcal H_{t-1}$. Newly initialized pairs receive fresh initial intervals. The single-race calculation gives
\begin{equation}
 \Prb(X_t=v\mid\mathcal H_{t-1})=p_t(v).
 \label{eq:conditional-token-law}
\end{equation}
Equation~\eqref{eq:race-factorization} and renewal of the winner preserve the residual law after conditioning on the new token and race time. The race depends only on countdowns with positive probability in the current context, so all other countdowns remain unchanged and conditionally independent of its outcome. Unused intervals remain independent as well, establishing the invariant conditional on $\mathcal H_t$.

This argument also covers a returning context with different LLM probabilities: its saved residuals still have the required conditional law, so the next race uses the new probabilities correctly. Averaging over race times while keeping the generated prefix fixed yields
\begin{equation}
 \Prb(X_t=v\mid X_{<t}=x_{<t},m)=P_M(v\mid x_{<t}).
 \label{eq:prefix-token-law}
\end{equation}
The chain rule then gives
\begin{equation}
 \E_{K\sim P_K}\!\left[P_{M,w}(x_{1:T}\mid m,K)\right]
 =\Prb(X_{1:T}=x_{1:T}\mid m)
 =\prod_{t=1}^T P_M(x_t\mid x_{<t}),
 \label{eq:sequence-chain-rule}
\end{equation}
which is Equation~\eqref{eq:preserve}.
\end{proof}

The token-level result establishes host-dependent selection, while the sequence-level argument justifies adaptive reuse of sampling progress. Together, they support continued embedding at repeated one-token contexts, limiting the spread of edit damage without discarding embedding opportunities.

\section{Energy Detection}
\label{app:receiver}\label{app:energy}
This appendix derives the null distribution and false-positive control used in Section~\ref{sec:detection}.

\paragraph{Detection Intuition.}
Embedding favors token-pair vectors aligned with the message direction, motivating detection through the magnitude of their sum $S$. Whereas VSR uses coordinate signs to recover bits, ED uses $E_{\rm det}=\|S\|^2/N$ to measure their combined strength. Vector length does not require knowing the message direction, so detection uses the same watermark signals without knowing the embedded message.

\paragraph{Exact Null Distribution.}
Under $H_0$, the received text $\widetilde x$ is independent of the secret key. We first fix this text, hence its distinct pair set $\mathcal P$ and count $N>0$, and consider randomness only over $K\sim P_K$. Independence ensures that conditioning on the text does not change the key distribution. Under the Gaussian construction in Section~\ref{sec:intervals}, the reconstructed vectors $z_{c,v,0}$ for these fixed, distinct pairs therefore remain independent $\mathcal N(0,I_b)$ variables. Consequently,
\begin{equation}
 S=\sum_{(c,v)\in\mathcal P}z_{c,v,0},
 \qquad S\mid\widetilde x\sim\mathcal N(0,NI_b).
 \label{eq:null-vector-sum}
\end{equation}
Each coordinate has variance $N$. Dividing by $\sqrt N$ gives $b$ independent standard normal coordinates, whose squared sum follows a chi-square distribution:
\begin{equation}
 \frac{S}{\sqrt N}\,\bigg|\,\widetilde x\sim\mathcal N(0,I_b),
 \qquad
 E_{\rm det}\mid\widetilde x\sim\chi_b^2.
 \label{eq:energy-null}
\end{equation}
The degrees of freedom are $b$, while normalization removes the dependence of the null distribution on the pair count $N$.

\paragraph{Distinct Pairs.}
Repeated occurrences of a pair reconstruct the same vector. Counting them separately would introduce perfectly correlated copies and invalidate the covariance $NI_b$ used above. Counting each distinct pair once is therefore essential for this normalization and chi-square calibration.

\paragraph{False-Positive Control.}
Let $F_{\chi_b^2}$ denote the cumulative distribution function of $\chi_b^2$. The upper-tail p-value in Section~\ref{sec:detection} can be written as
\begin{equation}
 p_E=1-F_{\chi_b^2}(E_{\rm det}).
 \label{eq:energy-pvalue-cdf}
\end{equation}
Because Equation~\eqref{eq:energy-null} gives a continuous null distribution, the probability integral transform implies $p_E\mid\widetilde x\sim\operatorname{Unif}(0,1)$ under $H_0$. Thresholding at any $\alpha\in(0,1)$ therefore gives
\begin{equation}
 \Prb_{K\sim P_K}(p_E\le\alpha\mid\widetilde x)=\alpha
 \quad\text{when }N>0.
 \label{eq:energy-level}
\end{equation}
Equivalently, detection uses $E_{\rm det}\ge F_{\chi_b^2}^{-1}(1-\alpha)$, a threshold determined by $b$ and $\alpha$, independent of $N$. For any distribution of received texts independent of $K$ and with $N>0$, the joint false-positive probability over the text and $K\sim P_K$ is therefore $\alpha$, even when text lengths vary.

\section{Experimental Protocol}
\label{app:protocol}
\subsection{Baseline Configurations}
\label{app:baselines}
For a fair comparison, the strength parameters of biased baselines are selected with the aim of keeping text quality at a comparable level. Table~\ref{tab:baseline-configurations} lists the embedding hyperparameters, which are fixed across payload sizes and attack conditions.
\begin{table}[H]
\papertablestyle
\setlength{\tabcolsep}{4pt}
\caption{Embedding configurations of the twelve baselines, grouped by biased and unbiased embedding. Strength $\delta$ and symbol parameter $m$ follow the definitions of each baseline. SPC(5,4) is a single parity-check code with four data bits and one check bit.}\label{tab:baseline-configurations}
\begin{tabular*}{\textwidth}{@{\extracolsep{\fill}}cll@{}}
\toprule
Category & \multicolumn{1}{c}{Method} & \multicolumn{1}{c}{Embedding Setting} \\
\midrule
\multirow{7}{*}{Biased} & BalanceMarking & $\delta=1.5$, 8-bit segments \\
 & CredID & $\delta=1.5$, 8-bit segments \\
 & MOSAIC & $\delta=1.5$ \\
 & MPAC & $\delta=2$, radix 4 \\
 & RSBH & $\delta=1$ \\
 & WaterMod & $\delta=2.5$ \\
 & XMark & $\delta=2$ \\
\midrule
\multirow{5}{*}{Unbiased} & BiMark & $\delta=1$, 10 layers \\
 & StealthInk & $m=1$ \\
 & PURA & SPC(5,4) coding \\
 & QuantileMark & $m=2$ \\
 & WeaveMark & $\delta=1$, 8 message layers + 2 detection layers \\
\bottomrule
\end{tabular*}
\end{table}

\FloatBarrier
\section{Additional Evaluation}
\label{app:full-results}
This appendix extends Section~\ref{sec:experiments} with three comparisons against the same twelve baselines. Appendix~\ref{app:clean-lfqa} evaluates text quality, recovery, and detection on LFQA using Qwen2.5-7B-Instruct with 32 and 64 bit payloads. Appendix~\ref{app:gemma} evaluates the same metrics with Gemma-2-9B Base on both LFQA and C4 at both payload lengths. Each clean setting contains 500 texts of 200 tokens per method. Appendix~\ref{app:robust-lfqa} evaluates LFQA robustness using 500 matched Qwen texts with 32 bit payloads, each originally containing 200 tokens. The seven attack conditions comprise deletion and substitution at rates of 10\%, 30\%, and 50\%, together with DIPPER paraphrasing. Baseline configurations follow Appendix~\ref{app:baselines}, and evaluation metrics follow Section~\ref{sec:experiments}. Detection uses 5000 unwatermarked texts per model and dataset.

\subsection{Text Quality, Recovery, and Detection on LFQA}
\label{app:clean-lfqa}
Table~\ref{tab:clean-lfqa} reports the Qwen comparison on unedited LFQA texts, complementing the C4 results in Table~\ref{tab:clean}.
\begin{table}[H]
\papertablestyle
\setlength{\tabcolsep}{4pt}
\caption{Clean Qwen results on LFQA with 32 and 64 bit payloads. Bit accuracy (BA), area under the ROC curve (AUC), and true positive rate (TPR) at a 1\% false positive rate (FPR) are percentages. Bold marks the best displayed BA, AUC, and TPR. Baselines without reported detection appear first. Dashes indicate unavailable or inapplicable results.}\label{tab:clean-lfqa}
\begin{tabular*}{\textwidth}{@{\extracolsep{\fill}}lrrrrrrrr@{}}
\toprule
\multicolumn{1}{c}{\multirow{2}{*}{Method}} & \multicolumn{4}{c}{\textbf{LFQA (32 bits)}} & \multicolumn{4}{c}{\textbf{LFQA (64 bits)}} \\
\cmidrule(lr){2-5}\cmidrule(lr){6-9}
 & PPL $\downarrow$ & BA $\uparrow$ & AUC $\uparrow$ & TPR $\uparrow$ & PPL $\downarrow$ & BA $\uparrow$ & AUC $\uparrow$ & TPR $\uparrow$ \\
\midrule
No watermark & 11.05 & -- & -- & -- & 11.05 & -- & -- & -- \\
\midrule
BalanceMarking & 11.13 & 62.78 & -- & -- & 11.25 & 55.63 & -- & -- \\
CredID & 11.56 & 62.14 & -- & -- & 11.54 & 55.19 & -- & -- \\
PURA & 11.29 & 85.05 & -- & -- & 11.21 & 73.34 & -- & -- \\
RSBH & 11.56 & 52.62 & -- & -- & 11.61 & 50.92 & -- & -- \\
WaterMod & 14.34 & 74.34 & -- & -- & 14.58 & 72.19 & -- & -- \\
\midrule
BiMark & 11.11 & 84.74 & 99.74 & 96.60 & 11.12 & 69.27 & 99.18 & 90.80 \\
MOSAIC & 12.04 & 52.51 & 50.37 & 1.20 & 12.20 & 50.97 & 50.00 & 0.00 \\
MPAC & 12.96 & 79.74 & 95.52 & 67.20 & 12.86 & 70.62 & 91.76 & 45.80 \\
QuantileMark & 10.99 & 77.95 & 90.92 & 48.60 & 11.06 & 69.47 & 82.93 & 16.60 \\
StealthInk & 11.21 & 77.25 & 90.77 & 35.80 & 11.00 & 69.81 & 82.09 & 21.20 \\
WeaveMark & 11.14 & 71.03 & 99.03 & 85.20 & 11.11 & 69.46 & 98.91 & 82.20 \\
XMark & 12.72 & 79.51 & 93.09 & 48.40 & 12.91 & 70.58 & 87.75 & 21.80 \\
\midrule
\textcolor{blue!45!black}{ClockMark} & 10.97 & \textbf{93.42} & \textbf{99.95} & \textbf{98.80} & 11.07 & \textbf{85.98} & \textbf{99.80} & \textbf{97.20} \\
\bottomrule
\end{tabular*}
\end{table}

\paragraph{Larger Payloads Widen the Recovery Lead}
ClockMark achieves 93.42\% BA at 32 bits and 85.98\% at 64 bits, compared with 85.05\% and 73.34\% for PURA, the strongest recovery baseline. Although BA decreases as the payload doubles, the lead grows from 8.37 to 12.64 percentage points. The same trend appears on C4, showing that the recovery advantage at larger payloads holds across both datasets.

\paragraph{Detection Remains Strong at Both Payload Lengths}
TPR decreases from 98.80\% to 97.20\% as the payload increases from 32 to 64 bits, whereas BiMark, the strongest detection baseline, decreases from 96.60\% to 90.80\%. The TPR advantage therefore grows from 2.20 to 6.40 percentage points. AUC remains 99.95\% and 99.80\% at the two payload lengths, indicating strong separation between watermarked and unwatermarked texts even as recovery becomes more difficult.

\paragraph{Perplexity Remains Close to Unwatermarked Text}
PPL is 10.97 and 11.07 at 32 and 64 bits, respectively, compared with 11.05 for unwatermarked text. These values are also close to those of BiMark and PURA. The higher recovery and detection scores are therefore accompanied by perplexity comparable to baselines that also preserve generation quality.

\FloatBarrier
\subsection{Text Quality, Recovery, and Detection on Gemma}
\label{app:gemma}
Table~\ref{tab:gemma} compares the methods on LFQA and C4 using Gemma-2-9B Base with 32 and 64 bit payloads. Generation uses temperature 1 and top-$k=128$, without a chat template. PPL is evaluated on the generated texts and 500 unwatermarked texts for each dataset, using the same evaluator as in the Qwen experiments.

\paragraph{Recovery Leads on Both Datasets}
At 32 bits, ClockMark achieves 99.57\% BA on LFQA and 99.52\% on C4, exceeding WeaveMark, the strongest recovery baseline, at 98.87\% and 98.42\%. At 64 bits, ClockMark retains 97.33\% and 97.02\% BA, and the lead widens to 4.76 and 7.56 percentage points. The larger gains at 64 bits are consistent with the Qwen comparisons, extending the recovery advantage to another model family.

\paragraph{Detection Remains Near Perfect}
At both payload lengths, ClockMark achieves AUC above 99.99\%, with TPR of 99.80\% on LFQA and 100.00\% on C4. These TPR values match the best reported results. ClockMark thus combines detection at the level of the strongest baselines with the highest BA in all four Gemma settings.

\paragraph{Perplexity Remains Stable Across Payload Lengths}
On LFQA, PPL is 13.22 and 13.20 at 32 and 64 bits, compared with 13.42 for unwatermarked text. On C4, the corresponding values are 12.78 and 12.90 against 12.47. Doubling the payload changes PPL by at most 0.12, while recovery remains above 97\% BA. The recovery and detection gains therefore coexist with perplexity close to the unwatermarked reference across both model families.
\par
\begingroup
\papertablestyle
\setlength{\tabcolsep}{4pt}
\setlength{\LTcapwidth}{\textwidth}
\setlength{\LTleft}{0pt}\setlength{\LTright}{0pt}
\begin{longtable}{@{\extracolsep{\fill}}lrrrrrrrr@{}}
\caption{Clean Gemma-2-9B results on LFQA and C4 with 32 and 64 bit payloads. Bit accuracy (BA), area under the ROC curve (AUC), and true positive rate (TPR) at a 1\% false positive rate (FPR) are percentages. Bold marks the best displayed BA, AUC, and TPR. Baselines without reported detection appear first. Dashes indicate unavailable or inapplicable results.}\label{tab:gemma}\\
\toprule
\multicolumn{1}{c}{\multirow{2}{*}{Method}} & \multicolumn{4}{c}{\textbf{LFQA (32 bits)}} & \multicolumn{4}{c}{\textbf{C4 (32 bits)}} \\*
\cmidrule(lr){2-5}\cmidrule(lr){6-9}
 & PPL $\downarrow$ & BA $\uparrow$ & AUC $\uparrow$ & TPR $\uparrow$ & PPL $\downarrow$ & BA $\uparrow$ & AUC $\uparrow$ & TPR $\uparrow$ \\*
\midrule
No watermark & 13.42 & -- & -- & -- & 12.47 & -- & -- & -- \\*
\midrule
BalanceMarking & 14.72 & 66.91 & -- & -- & 13.84 & 65.92 & -- & -- \\*
CredID & 14.84 & 71.20 & -- & -- & 13.98 & 68.15 & -- & -- \\*
PURA & 13.16 & 69.63 & -- & -- & 12.79 & 68.96 & -- & -- \\*
RSBH & 14.28 & 57.96 & -- & -- & 13.20 & 56.21 & -- & -- \\*
WaterMod & 21.63 & 92.69 & -- & -- & 18.73 & 90.14 & -- & -- \\*
\midrule
BiMark & 13.30 & 95.96 & 99.80 & 99.60 & 12.74 & 95.53 & \textbf{100.00} & \textbf{100.00} \\*
MOSAIC & 15.73 & 60.58 & 57.70 & 16.00 & 14.36 & 56.61 & 54.08 & 8.60 \\*
MPAC & 18.12 & 91.47 & 99.85 & 99.40 & 15.86 & 89.88 & 99.93 & 97.80 \\*
QuantileMark & 13.28 & 49.95 & 47.72 & 0.60 & 12.81 & 49.24 & 47.92 & 0.60 \\*
StealthInk & 13.34 & 88.78 & 99.41 & 91.60 & 12.40 & 88.14 & 99.25 & 92.00 \\*
WeaveMark & 13.61 & 98.87 & 99.99 & \textbf{99.80} & 12.55 & 98.42 & 99.98 & 99.80 \\*
XMark & 17.04 & 90.33 & 99.10 & 79.20 & 15.69 & 89.26 & 99.09 & 80.80 \\*
\midrule
\textcolor{blue!45!black}{ClockMark} & 13.22 & \textbf{99.57} & \textbf{100.00} & \textbf{99.80} & 12.78 & \textbf{99.52} & \textbf{100.00} & \textbf{100.00} \\
\midrule
\addlinespace[4pt]
\multicolumn{1}{c}{\multirow{2}{*}{Method}} & \multicolumn{4}{c}{\textbf{LFQA (64 bits)}} & \multicolumn{4}{c}{\textbf{C4 (64 bits)}} \\*
\cmidrule(lr){2-5}\cmidrule(lr){6-9}
 & PPL $\downarrow$ & BA $\uparrow$ & AUC $\uparrow$ & TPR $\uparrow$ & PPL $\downarrow$ & BA $\uparrow$ & AUC $\uparrow$ & TPR $\uparrow$ \\*
\midrule
No watermark & 13.42 & -- & -- & -- & 12.47 & -- & -- & -- \\*
\midrule
BalanceMarking & 14.36 & 57.05 & -- & -- & 13.68 & 56.02 & -- & -- \\*
CredID & 15.06 & 59.52 & -- & -- & 14.03 & 58.39 & -- & -- \\*
PURA & 13.46 & 62.91 & -- & -- & 12.69 & 61.81 & -- & -- \\*
RSBH & 14.38 & 52.28 & -- & -- & 13.78 & 52.12 & -- & -- \\*
WaterMod & 21.72 & 87.19 & -- & -- & 18.47 & 84.49 & -- & -- \\*
\midrule
BiMark & 13.20 & 83.89 & \textbf{100.00} & \textbf{99.80} & 12.64 & 83.38 & \textbf{100.00} & \textbf{100.00} \\*
MOSAIC & 15.80 & 51.46 & 50.05 & 0.20 & 14.79 & 51.79 & 49.99 & 0.00 \\*
MPAC & 17.50 & 81.85 & 99.22 & 95.00 & 15.99 & 80.00 & 99.45 & 95.00 \\*
QuantileMark & 13.31 & 50.17 & 49.60 & 1.20 & 12.73 & 49.29 & 49.61 & 0.80 \\*
StealthInk & 13.62 & 80.25 & 98.17 & 83.40 & 12.98 & 79.50 & 98.24 & 78.80 \\*
WeaveMark & 13.60 & 92.57 & 99.97 & 99.40 & 12.63 & 89.46 & 99.98 & 99.40 \\*
XMark & 17.12 & 80.38 & 97.06 & 32.80 & 15.23 & 78.22 & 96.70 & 47.00 \\*
\midrule
\textcolor{blue!45!black}{ClockMark} & 13.20 & \textbf{97.33} & \textbf{100.00} & \textbf{99.80} & 12.90 & \textbf{97.02} & \textbf{100.00} & \textbf{100.00} \\
\bottomrule
\end{longtable}
\endgroup

\FloatBarrier
\subsection{Robustness on LFQA}
\label{app:robust-lfqa}
Table~\ref{tab:robust-lfqa} reports LFQA results under the same editing attacks as the C4 comparison in Section~\ref{sec:robustness}. Deletion and substitution use rates of 10\%, 30\%, and 50\%. Paraphrasing uses DIPPER-XXL with lexical and order diversity both set to 20. ClockMark achieves the highest BA and TPR across all seven conditions.
\begin{table}[t]
\papertablestyle
\renewcommand{\arraystretch}{1.0}
\setlength{\tabcolsep}{2pt}
\caption{LFQA robustness for 32 bit Qwen watermarks. Bit accuracy (BA), area under the ROC curve (AUC), and true positive rate (TPR) at a 1\% false positive rate (FPR) are percentages. Bold marks the best displayed BA, AUC, and TPR. Baselines without reported detection appear first. Dashes indicate unavailable or inapplicable results.}\label{tab:robust-lfqa}
\begin{tabular*}{\textwidth}{@{\extracolsep{\fill}}lrrrrrrrrr@{}}
\toprule
\multicolumn{1}{c}{\multirow{3}{*}[-4pt]{Method}} & \multicolumn{9}{c}{\textbf{Deletion}} \\
\cmidrule(lr){2-10}
 & \multicolumn{3}{c}{10\%} & \multicolumn{3}{c}{30\%} & \multicolumn{3}{c}{50\%} \\
\cmidrule(lr){2-4}\cmidrule(lr){5-7}\cmidrule(lr){8-10}
 & BA & AUC & TPR & BA & AUC & TPR & BA & AUC & TPR \\
\midrule
BalanceMarking & 53.77 & -- & -- & 37.35 & -- & -- & 25.26 & -- & -- \\
CredID & 53.81 & -- & -- & 37.83 & -- & -- & 24.41 & -- & -- \\
PURA & 72.51 & -- & -- & 58.13 & -- & -- & 53.54 & -- & -- \\
RSBH & 51.38 & -- & -- & 50.97 & -- & -- & 50.28 & -- & -- \\
WaterMod & 57.56 & -- & -- & 53.62 & -- & -- & 51.26 & -- & -- \\
\midrule
BiMark & 76.58 & 97.31 & 79.33 & 60.95 & 74.73 & 14.00 & 47.78 & 54.11 & 2.00 \\
MOSAIC & 51.48 & 49.88 & 0.22 & 50.84 & 50.10 & 0.67 & 50.36 & 49.88 & 0.22 \\
MPAC & 76.93 & 92.39 & 44.22 & 68.31 & 81.81 & 17.33 & 60.75 & 70.20 & 5.33 \\
QuantileMark & 64.83 & 64.82 & 7.56 & 52.69 & 50.35 & 1.11 & 50.10 & 50.34 & 0.89 \\
StealthInk & 69.30 & 73.09 & 9.33 & 57.49 & 51.16 & 0.89 & 52.83 & 43.51 & 0.44 \\
WeaveMark & 59.52 & 96.02 & 60.00 & 51.91 & 82.25 & 18.00 & 50.58 & 66.10 & 4.44 \\
XMark & 72.87 & 69.06 & 5.33 & 61.08 & 12.33 & 0.22 & 53.66 & 0.79 & 0.22 \\
\midrule
\textcolor{blue!45!black}{ClockMark} & \textbf{89.85} & \textbf{99.51} & \textbf{94.89} & \textbf{81.03} & \textbf{94.45} & \textbf{56.89} & \textbf{70.27} & \textbf{76.56} & \textbf{15.33} \\
\bottomrule
\end{tabular*}
\par\vspace{6pt}
\begin{tabular*}{\textwidth}{@{\extracolsep{\fill}}lrrrrrrrrrrrr@{}}
\toprule
\multicolumn{1}{c}{\multirow{3}{*}[-4pt]{Method}} & \multicolumn{9}{c}{\textbf{Substitution}} & \multicolumn{3}{c}{\multirow{2}{*}[-2pt]{\textbf{Paraphrasing}}} \\
\cmidrule(lr){2-10}
 & \multicolumn{3}{c}{10\%} & \multicolumn{3}{c}{30\%} & \multicolumn{3}{c}{50\%} & \multicolumn{3}{c}{} \\
\cmidrule(lr){2-4}\cmidrule(lr){5-7}\cmidrule(lr){8-10}\cmidrule(lr){11-13}
 & BA & AUC & TPR & BA & AUC & TPR & BA & AUC & TPR & BA & AUC & TPR \\
\midrule
BalanceMarking & 53.25 & -- & -- & 51.28 & -- & -- & 49.92 & -- & -- & 50.03 & -- & -- \\
CredID & 53.12 & -- & -- & 50.33 & -- & -- & 50.60 & -- & -- & 51.13 & -- & -- \\
PURA & 66.34 & -- & -- & 54.88 & -- & -- & 51.62 & -- & -- & 62.49 & -- & -- \\
RSBH & 50.94 & -- & -- & 49.67 & -- & -- & 49.75 & -- & -- & 51.07 & -- & -- \\
WaterMod & 54.99 & -- & -- & 52.67 & -- & -- & 51.62 & -- & -- & 52.91 & -- & -- \\
\midrule
BiMark & 70.96 & 91.69 & 49.33 & 54.69 & 61.89 & 4.00 & 47.79 & 52.49 & 1.11 & 62.56 & 77.80 & 16.89 \\
MOSAIC & 51.00 & 49.99 & 0.44 & 51.42 & 49.99 & 0.44 & 50.92 & 49.88 & 0.22 & 50.49 & 49.88 & 0.22 \\
MPAC & 71.78 & 84.15 & 25.56 & 61.91 & 62.89 & 4.22 & 56.16 & 51.70 & 1.11 & 65.47 & 70.29 & 7.78 \\
QuantileMark & 59.56 & 57.75 & 3.33 & 52.50 & 50.11 & 1.56 & 49.55 & 49.98 & 1.56 & 57.03 & 54.76 & 2.44 \\
StealthInk & 64.08 & 63.48 & 6.67 & 54.19 & 50.83 & 1.11 & 51.28 & 48.53 & 1.11 & 60.12 & 58.18 & 2.67 \\
WeaveMark & 55.19 & 92.71 & 41.33 & 49.88 & 72.05 & 7.11 & 49.92 & \textbf{58.02} & 2.00 & 51.97 & 83.75 & 19.56 \\
XMark & 66.79 & 71.03 & 7.33 & 56.72 & 50.89 & 2.44 & 52.04 & 49.69 & 2.67 & 61.07 & 53.92 & 3.78 \\
\midrule
\textcolor{blue!45!black}{ClockMark} & \textbf{84.94} & \textbf{97.47} & \textbf{78.89} & \textbf{70.58} & \textbf{76.65} & \textbf{14.44} & \textbf{61.11} & 57.77 & \textbf{3.33} & \textbf{78.02} & \textbf{90.75} & \textbf{41.56} \\
\bottomrule
\end{tabular*}
\end{table}

\paragraph{Recovery and Detection Withstand Token Deletion}
Across deletion rates, ClockMark retains the highest BA, ranging from 89.85\% at 10\% deletion to 70.27\% at 50\%. The largest TPR lead occurs at 30\%, with 56.89\% against 18.00\% for WeaveMark. At 50\%, TPR remains 15.33\% versus 5.33\% for MPAC. The recovery and detection advantage therefore persists as more tokens are removed.

\paragraph{Recovery Remains Stronger Under Substitution}
Substitution causes larger losses than deletion. At 30\% substitution, ClockMark retains 70.58\% BA and 14.44\% TPR, exceeding MPAC at 61.91\% BA and WeaveMark at 7.11\% TPR. At 50\%, ClockMark leads in BA (61.11\%) and TPR (3.33\%), while WeaveMark has a slightly higher AUC (58.02\% versus 57.77\%). The low TPR across methods shows that heavy substitution remains difficult to detect at 1\% FPR.

\paragraph{Robustness Extends to Paraphrasing}
Under paraphrasing, ClockMark achieves 78.02\% BA against 65.47\% for MPAC, a gain of 12.55 percentage points. AUC and TPR reach 90.75\% and 41.56\%, exceeding WeaveMark, the strongest detection baseline, at 83.75\% and 19.56\%. These gains extend the robustness advantage to paraphrasing on LFQA.

\end{document}